\newenvironment{packeditemize}{\begin{list}{$\bullet$}{\setlength{\itemsep}{0pt}\addtolength{\labelwidth}{-5pt}\setlength{\leftmargin}{\labelwidth}\setlength{\listparindent}{\parindent}\setlength{\parsep}{0pt}\setlength{\topsep}{3pt}}}{\end{list}}
\newtheorem{theorem}{Theorem}
\newtheorem{lemma}{Lemma}
\newtheorem{sublemma}{Lemma}[lemma]
\newcommand{\assign}{:=}
\newcommand{\nobracket}{}
\newcommand{\nocomma}{}
\newcommand{\tmop}[1]{\ensuremath{\operatorname{#1}}}
\newcommand\numberthis{\addtocounter{equation}{1}\tag{\theequation}}
\newcommand{\noplus}{}
\begin{document}
%

%

\runningauthor{Wu, Ioannidis, Sznaier, Li, Kaeli, Dy}
\twocolumn[
\aistatstitle{Iterative Spectral Method for Alternative Clustering}
\aistatsauthor{Chieh Wu \And Stratis Ioannidis \And Mario Sznaier}
\aistatsauthor{Xiangyu Li \And David Kaeli \And Jennifer G. Dy}
\aistatsaddress{Electrical and Computer Engineering Dept., Northeastern University, Boston, MA}
]

%
%
%
%
\begin{abstract}
Given a dataset and an existing clustering as input, alternative clustering aims to find an alternative partition.  One of the state-of-the-art approaches is Kernel Dimension Alternative Clustering (KDAC). We propose a novel Iterative Spectral Method (ISM)   that greatly improves the scalability of KDAC. Our algorithm is intuitive, relies on easily implementable spectral decompositions, and comes with theoretical guarantees.   Its computation time improves upon existing implementations of KDAC by as much as 5 orders of magnitude.
\end{abstract}

\section{Introduction}

Clustering, i.e., the process of grouping similar objects in a dataset together, is a classic problem. It is extensively used for exploratory data analysis.  Traditional clustering algorithms typically identify a single partitioning of a given dataset.  However, data is often multi-faceted and can be both interpreted and clustered through multiple viewpoints (or, {\em views}).  For example, the same face data can be clustered based on either identity or based on pose.  In real applications, partitions generated by a clustering algorithm may not correspond to the view a user is interested in.  

In this paper, we address the problem of finding an {\em alternative clustering}, given a dataset and an existing, pre-computed clustering.  Ideally, one would like the alternative clustering to be {\em novel} (i.e., non-redundant) w.r.t. the existing clustering to reveal a new viewpoint to the user.  Simultaneously, one would like the result to reveal partitions of high clustering {\em quality}.
 Several recent papers propose algorithms for  alternative clustering~\cite{gondek2007non,cui2010learning,dang2010generation,davidson2008finding,cui2007non,niu2014iterative}.  Among them, Kernel Dimension Alternative Clustering (KDAC) is a flexible approach, shown to have superior performance compared to several competitors~\cite{niu2014iterative}. KDAC is as powerful as spectral clustering in discovering arbitrarily-shaped clusters (including ones that are not linearly separable) that are non-redundant w.r.t.~an existing clustering.  As an additional advantage, KDAC can simultaneously learn the subspace in which the alternative clustering resides.  

The flexibility of KDAC comes at a price: the KDAC formulation involves optimizing a non-convex cost function constrained over the space of orthogonal matrices (i.e, the Stiefel manifold). Niu et al.~\cite{niu2014iterative} proposed a Dimension Growth (DG) heuristic for solving this optimization problem, which is nevertheless highly computationally intensive. We elaborate on its complexity in Section~\ref{gen_inst}; experimentally, DG is quite slow, with a convergence time of roughly $46$ hours on an Intel Xeon CPU, for a $624$ sample-sized face data (c.f.~Section~\ref{sec:exp}).  This limits the applicability of KDAC in interactive exploratory data analysis settings, which often require results to be presented to a user within a few seconds.  It also limits the scalability of KDAC to large data.  Alternately, one can solve the KDAC optimization problem by gradient descent on a Stiefel manifold (SM)~\cite{wen2013feasible}. However, given the lack of convexity, both DG or SM are prone to get trapped to local minima. Multiple iterations  with random initializations are required to ameliorate the effect of locality.  This increases computation time, and also decreases in effectiveness as the dimensionality of the data increases: the increase in dimension rapidly expands the search space and the abundance of local minima. As such, with both DG and SM, the clustering quality is negatively affected by an increase in dimension.

{\bf Our Contributions.}   
Motivated by the above issues, we make the following contributions:
\begin{packeditemize}
\item We propose an Iterative Spectral Method (ISM), a \emph{novel algorithm} for solving the non-convex optimization constrained on a Stiefel manifold problem inherent in KDAC. Our algorithm has  several highly desirable properties. First, it \emph{significantly outperforms} traditional methods such as DG and SM in terms of both computation time and quality of the produced alternative clustering.  Second, the algorithm relies on an \emph{intuitive use of iterative spectral decompositions}, making it  both easy  to  understand as well as easy to implement, using off-the-shelf libraries. 
\item  ISM has a natural initialization, constructed through a Taylor approximation of the problem's Lagrangian. Therefore, high quality results can be obtained without random restarts in search of a better initialization. We show that this initialization is a contribution in its own right, as its use improves performance of competitor algorithms.
\item We provide \emph{theoretical guarantees} on its fixed point.  In particular, we establish conditions under which  the fixed point of ISM satisfies both the 1st and 2nd order necessary conditions for local optimality. 
\item We extensively evaluate the performance of ISM in solving KDAC with synthetic and real data under various clustering quality and cost measures.  Our results show an improvement in execution time by up to a factor of roughly $70$ and $10^5$, compared to SM and DG, respectively. At the same time, ISM  outperforms SM and DG in clustering quality measures along with significantly lower computational cost.


\end{packeditemize}
\noindent\textbf{Related Work.}
There exist two general modes of discovering alternative clusterings  -- simultaneously or iteratively.  Simulataneous approaches find the multiple alternative clusterings at the same time~\cite{caruana2006meta,jain1999data,dang2010generation,dasgupta2010mining,mansinghka2009cross, niu2012nonparametric,poon2010variable}.  Iterative approaches find an alternative clustering given existing clustering~\cite{cui2010learning}.  Since this work focuses on the iterative paradigm, we elaborate on the related work along these lines.  Alternative clustering methods differ in how they measure novelty and cluster quality.  
Gondek and Hofmann \cite{gondek2007non} find an alternative clustering by conditional information (CI) bottleneck. Bae and Bailey \cite{bae2006coala} perform agglomerative clustering with cannot-link constraints imposed on the data points that belong together in the existing clustering. Cui et al.~\cite{cui2007non} find an alternative clustering by projecting the data to a subspace orthogonal to the existing clustering. Qi and Davidson \cite{qi2009principled} search for novelty by minimizing the Kullback-Leiber (KL) divergence between the original data and the transformed data subject to the constraint that the sum-squared-error between samples in the projected space with the existing clusters is small. 
Dang and Bailey \cite{dang2010hierarchical} find quality clusters by maximizing the mutual information (MI) between the alternative clusters and the data while simultaneously ensuring novelty by minimizing the MI between  alternative and existing clusterings.  

KDAC~\cite{niu2014iterative} discovers an alternative clustering by maximizing for cluster quality based on the spectral clustering objective and at the same time maximizing for novelty based on a non-linear dependence measure, HSIC~\cite{gretton2005measuring}, on the projected subspace of the alternative clustering. KDAC's ability to detect arbitrarily-shaped clusters is due to its use of the Hilbert-Schmidt Independence Criterion (HSIC)~\cite{gretton2005measuring} as a cluster quality measure. HSIC is motivated by the objective function of spectral clustering.  Moreover, since HSIC models  non-linear dependence, it is also utilized by KDAC to measure novelty.
In contrast, e.g., the orthogonal subspace projection approach in \cite{cui2007non} is limited, as it only captures linear dependencies.   Other approaches, such as \cite{gondek2007non,dang2010hierarchical}, can take non-linear dependencies into account by utilizing information theoretic measures.  However, doing so requires estimating joint probability distributions.  The advantage of KDAC over such approaches is that it utilizes HSIC for measuring novelty and cluster quality, which can capture non-linear dependencies through kernels, without having to explicitly learn the joint probability distributions; empirically, it significantly outperforms aforementioned schemes in clustering quality~\cite{niu2014iterative}.

\section{Kernel Dimension Alternative Clustering (KDAC) }
\label{gen_inst}

In alternative clustering, a dataset is provided along with existing clustering labels. Given this
as input, we seek a $\emph{new}$ clustering that is (a) distinct from the existing clustering, and (b) has high quality with respect to a clustering quality measure.
An example illustrating this is shown in Figure   \ref{fig:moon_illustrate}.  
This dataset comprises 400  points in $\mathcal{R}^4$. Projected to the first two dimensions, the dataset contains two clusters of intertwining parabolas shown as Clustering A. Projected to the last two dimensions, the dataset contains two Gaussian clusters shown as Clustering B. Points clustered together in one view can be in different clusters in the alternative view. In alternative clustering, given (a) the dataset, and (b) one of the two possible clusterings (e.g., Clustering B), we wish to discover the alternative clustering illustrated by the different view.

\begin{figure}[ht] 
  \centering
  \includegraphics[width=7cm,height=3cm]{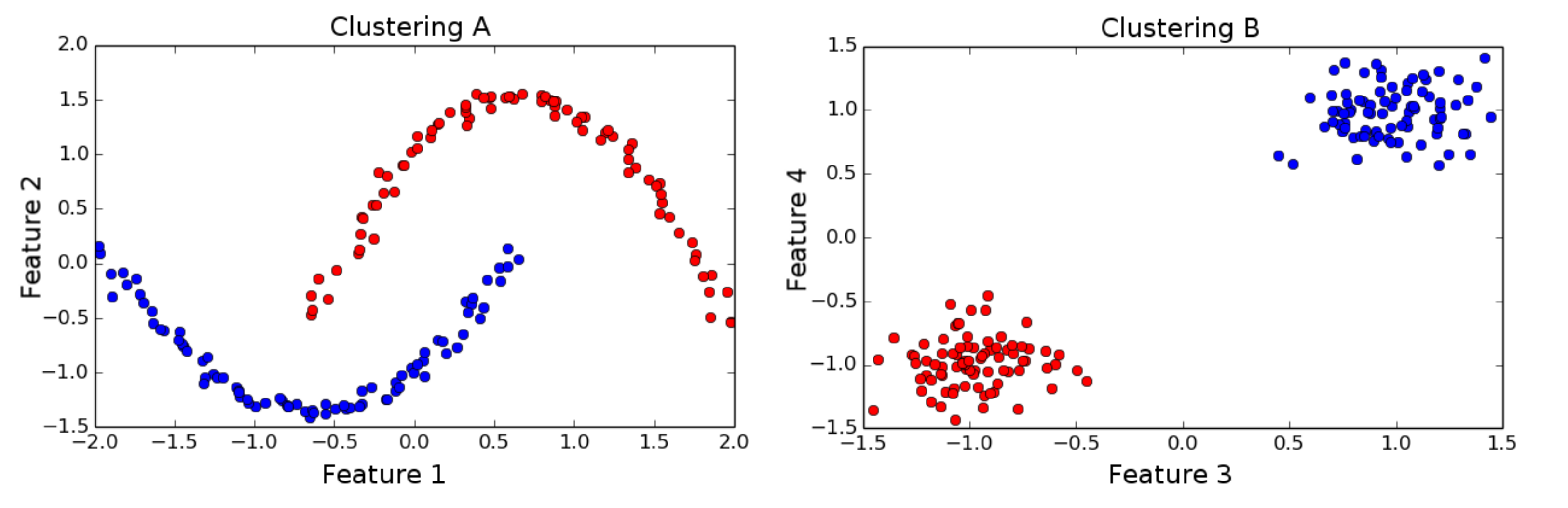}
  \caption{Four-dimensional moon dataset. Projection into the first two dimensions reveals different clusters than projection to the latter two dimensions.}
  \label{fig:moon_illustrate}
\end{figure}

Formally, let $X \in \mathcal{R}^{n \times d}$ be a dataset with $n$ samples and
$d$ features, along with an existing clustering 
$Y \in \mathcal{R}^{n \times k}$, where $k$ is
the number of clusters. If  $x_i$ belongs
to cluster $j$, then $Y_{i, j}= 1$; otherwise, $Y_{i,j}=0$.
We wish to discover an alternative clustering $U \in \mathcal{R}^{n \times k}$ 
on some lower dimensional subspace of dimension $q \ll d$. 
Let $W \in \mathcal{R}^{d \times q}$ be a projection
matrix such that $ X W \in \mathcal{R}^{n \times q}$.

We seek the optimal projection $W$ and clustering  $U$ that 
maximizes the statistical dependence between $X W$ with $U$, yielding a high clustering quality, while minimizing the
dependence between $X W$ and $Y$, ensuring the novelty of the new clustering. 
Denoting DM as a Dependence Measure function, and using $\lambda$ as a weighing constant, this optimization can be written as:  
\begin{subequations}\label{eq:orig_objective}
  \begin{align}
\text{Maximize:}& \quad\tmop{DM} ( X W \nocomma, U) - \lambda \tmop{DM} ( X W, Y),\\
   \text{s.t :}& \quad W^T W = I, U^T U = I.
\end{align}
\end{subequations}  
As in spectral clustering, the labels of the alternative clustering are retrieved by performing $K$-means on matrix $U$, treating its rows as samples. 
There are many potential choices for DM. The most well-known measures are correlation and mutual information (MI). While correlation performs well in many applications, it lacks the ability to measure non-linear relationships. Although there is clear relationship in Clustering A in Figure \ref{fig:moon_illustrate}, correlation would mistakenly yield a value of nearly 0. 
As a dependence measure, MI is superior in that it also measures non-linear relationships. However, due to the probabilistic nature of its formulation, a joint distribution is required. Depending on the distribution, the computation of MI can be prohibitive. 

For these reasons, the Hilbert Schmidt Independence Criterion (HSIC) \cite{gretton2005measuring} has been proposed for KDAC \cite{niu2014iterative}. Like MI, it captures non-linear relationships. Unlike MI, HSIC does not require estimating a joint distribution, and it relaxes the need to discretize continuous variables. In addition, as shown by Niu et al.~\cite{niu2014iterative}, 
HSIC is mathematically equivalent to spectral clustering, further implying that a high HSIC between the data and $U$ yields  high clustering quality.
A visual comparison of HSIC and correlation  can be found in
Figure \ref{HSIC_capture_nonlinear} of Appendix \ref{App:HSIC} in the supplement.

Using HSIC as a dependence measure, the objective of KDAC becomes 
\begin{subequations}\label{eq:main_objective}
  \begin{align}
\text{Maximize:}& \quad\tmop{HSIC} ( X W \nocomma, U) - \lambda \tmop{HSIC} ( X W, Y),\\
   \text{subject to:}& \quad W^T W = I, U^T U = I.
\end{align}
\end{subequations}  
where 
$\tmop{HSIC} ( X, Y) \equiv \frac{1}{(n-1)^2} \tmop{Tr} ( K_{X} H K_Y H).$
Here, the variables $K_{X}$ and $K_Y$ are Gram matrices, 
and the $H$ matrix is a
centering matrix where $H = I - \frac{1}{n} \bm{1}_n \bm{1}^T_n$ with 
$\bm{1}$ the $n$-sized  vector of all ones. 
The elements of $K_{X}$ and $K_Y$
are calculated by kernel functions $k_{X} ( x_i, x_j)$ and $k_Y ( y_i, y_j)$.  
 The kernel functions for $Y$ and $U$ used in KDAC are $K_Y = Y Y^T$ and $K_U = U U^T$, and the kernel function for $XW$ is  the Gaussian $k_{XW} ( x_i, x_j) = \exp(- {\tmop{Tr} [ ( x_i - x_j)^T
W W^T ( x_i - x_j)]}/{(2 \sigma^2)})$. 
Due to the equivalence of HSIC and spectral clustering, the practice of normalizing the kernel $K_{XW}$ is adopted from spectral clustering by Niu et al.~\cite{niu2014iterative}.   
That is, for $K_{XW}$ the unnormalized
Gram matrix, the normalized matrix is defined as $D^{- 1 / 2} K_{XW} D^{- 1 / 2}$ where $D=\mathrm{diag}(\bm{1}_n^TK_{XW})$ is a diagonal matrix whose elements are the  column-sums of $K_{XW}$.

\textbf{KDAC Algorithm.}
The optimization problem \eqref{eq:main_objective} is non-convex. The KDAC algorithm solves (\ref{eq:main_objective}) using alternate maximization between the variables $U$, $W$ and $D$, updating each while holding the other two fixed. After convergence, motivated by spectral clustering, $U$ is discretized via $K$-means to provide the alternative clustering. The algorithm proceeds in an iterative fashion,  summarized in Algorithm \ref{KDAC_algorithm}. In each iteration, variables $D$, $U$, and $W$ are updated as follows:

\textbf{Updating D:} 
While holding $U$ and $W$ constant, $D$ is computed as  
$D=\mathrm{diag}(\bm{1}_n^TK_{XW})$.
Matrix $D$ is subsequently treated as a scaling constant throughout the rest of the iteration. 

\textbf{Updating U:} 
Holding $W$ and $D$ constant and solving for $U$, (\ref{eq:main_objective}) reduces to :
\begin{align} \label{eq:spectral_clustering}
     \textstyle\max_{U:U^TU=I}  \tmop{Tr} ( U^T \mathcal{Q} U), 
\end{align}
where  $\mathcal{Q}= H D^{- 1 / 2} K_{XW} D^{- 1 / 2} H$. This is precisely spectral clustering~\cite{von2007tutorial}: 
 \eqref{eq:spectral_clustering} can be solved by setting $U$'s columns to the $k$ most dominant eigenvectors of $\mathcal{Q}$, which can be done in $O(n^3)$ time.

\begin{algorithm}[t]
    \scriptsize
    \SetKwInOut{Input}{Input}
    \SetKwInOut{Output}{Output}
    \Input{dataset $X$, original clustering $Y$}
    \Output{alternative clustering $U$ }
    Initialize $W_0$ using $W_{\mathrm{init}}$ from (\ref{eq:winit})\\
    Initialize $U_0$ from original clustering\\
    Initialize $D_0$ from $W$ and original clustering\\
    \While{($U$ not converged) or ($W$ not converged)}{
        Update $D$ \\
        Update $W$ by solving Equation (\ref{eq:main_cost_function})\\
        Update $U$ by solving Equation (\ref{eq:spectral_clustering})\\
	}
	Clustering Result $\gets$ Apply K-means to $U$
	\caption{KDAC Algorithm} 
\label{KDAC_algorithm}
\end{algorithm}
\begin{algorithm}[t]
    \scriptsize
    \SetKwInOut{Input}{Input}
    \SetKwInOut{Output}{Output}
    \Input{$U$,$D$,$X$, $Y$}
    \Output{ $W^*$ }
    Initialize $W_0$ to the previous value of $W$ in the master loop of KDAC.\\
	\While{$W$ not converged}{
		$W \gets \mathop{\mathrm{eig}}_{\min} ( \Phi ( W));$\\
	} 	
	\caption{ISM Algorithm} \label{masteralg}   
\end{algorithm}

\textbf{Updating W:} 
While holding $U$ and $D$ constant to solve for $W$, (\ref{eq:main_objective}) reduces to:
\begin{subequations}
\label{eq:main_cost_function}
\begin{align} 
   \text{Minimize:}\quad&F(W) =- \textstyle\sum_{i, j} \gamma_{i, j} e^{- \frac{\tmop{Tr} [ W^T A_{i, j}
   W]}{2 \sigma^2}}\label{eq:mainobj}\\
   \text{subject to:}\quad &W^TW=I\label{eq:mainconstr}
\end{align}
\end{subequations}
where $\gamma_{i,j}$ are the elements of matrix $\gamma = D^{- 1 / 2} H ( U U^T - \lambda Y Y^T) H D^{- 1 / 2}$, and 
$A_{i, j} = ( x_i - x_j) ( x_i - x_j)^T$ (see Appendix \ref{Cost_Derivation} in the supplement for the derivation). 
This objective, along with a Stiefel Manifold constraint, $W^TW=I$, pose a challenging optimization problem as neither is convex. Niu et al.~\cite{niu2014iterative} propose solving \eqref{eq:main_cost_function} through an algorithm termed Dimensional Growth (DG). This algorithm solves for $W$ by computing individual columns of $W$ separately through gradient descent (GD). Given a set of computed columns, the next column is computed by GD projected to a subspace orthogonal to the span of computed set. Since DG is based on GD, the computational complexity is dominated by computing the gradient of \eqref{eq:mainobj}. The latter is given by:
\begin{equation} \label{eq:gradient_of_main_cost_function}
\nabla F(W) =\textstyle\sum^n_i \sum_j^n  \frac{\gamma_{i, j}}{\sigma^2} e^{- \frac{\tmop{Tr} [
   W^T A_{i, j} W]}{2 \sigma^2}} A_{i, j} W. 
\end{equation}
%
%
%
The complexity of DG is $O(t_{DG} n^2d^2q)$, where $n$, $d$ are the dataset size and dimension, respectively, $q$ is the dimension of the subspace of the alternative clustering, and $t_{DG}$ is the number of iterations of gradient descent. The calculation of the gradient contributes the term $O(n^2d^2q)$. Although this computation is highly parallelizable, the algorithm still suffers from slow convergence rate. Therefore, $t_{DG}$ often dominates the computation cost.  


An alternative approach to optimize \eqref{eq:main_cost_function} is through classic methods for performing optimization on the Stiefel Manifold (SM) \cite{wen2013feasible}. The computational complexity of this algorithm is dominated by the computation of the gradient and a matrix inversion with $t_{SM}$ iterations.   This yields a complexity of $O(t_{SM} n^2 d^2 + t_{SM}d^3)$ for SM. Finally, as gradient methods applied to a non-convex objective, both SM and DG require multiple executions from random initialization points to find improved local minima. This approach becomes less effective as the dimension $d$ increases. 

\section{An Iterative Spectral Method}

The computation of KDAC is dominated by the $W$ updates in Algorithm~\ref{KDAC_algorithm}. Instead of using DG or SM to solve the optimization problem for $W$ in KDAC, we propose an Iterative Spectral Method (ISM).  Our algorithm is motivated from the following observations. The Lagrangian of \eqref{eq:main_cost_function} is:
 \begin{align*}
      \mathcal{L} ( W, \Lambda) =& - \textstyle\sum_{i, j} \gamma_{i, j} \exp\left(- \frac{\tmop{Tr}
      (W^T A_{i, j} W)}{2 \sigma^2}\right)\\
      &- \frac{1}{2} \tmop{Tr} ( \Lambda ( W^T W -
      I)) \numberthis \label{eq:lag}
\end{align*}
    Setting $\nabla_W \mathcal{L}(W,\Lambda)=0$ gives us the equation:
\begin{align}\Phi(W)W = W \Lambda,\label{eq:balance} \end{align}
where 
\begin{equation} \label{eq:phi_equation}
  \Phi ( W) = \textstyle\sum_{i, j} \frac{\gamma_{i, j}}{\sigma^2} \exp(- \frac{\tmop{Tr} [ W^T A_{i, j}
  W]}{2 \sigma^2}) A_{i, j},
\end{equation}
and $\Lambda$ is a diagonal matrix.
Recall that a feasible $W$, satisfying \eqref{eq:mainconstr}, is orthonormal. 
\eqref{eq:balance} is an eigenequation; thus, a stationary point $W$ of the Lagrangian \eqref{eq:lag}  comprises of $q$ eigenvectors of $\Phi(W)$ as columns.
Motivated by this observation, ISM attempts to find such a $W$ in the following iterative fashion. Let $W_0$ be an initial  matrix. Given $W_k$ at iteration $k$, the matrix $W_{k+1}$  is computed as:
$$W_{k+1} = \textstyle\mathop{\mathrm{eig}}_{\min} ( \Phi ( W_k)), \quad k=0,1,2,\ldots,$$
where the operator $\mathop{\mathrm{eig}}_{\min} (A)$ returns a matrix whose columns are
the $q$ eigenvectors corresponding to the smallest eigenvalues of $A$.


ISM is summarized in Alg.~\ref{masteralg}. Several important observations are in order. First, the algorithm ensures that $W_k$, for $k\geq 1$, is feasible, by construction: selected eigenvectors are orthonormal and satisfy \eqref{eq:mainconstr}. Second, it is also easy to see that a fixed point of the algorithm will also be a stationary point of the Lagrangian \eqref{eq:lag} (see also~Lemma~\ref{lemma:eig}). Though it is harder to prove,  selecting  eigenvectors corresponding to the \emph{smallest} eigenvalues is key: we show that this is precisely the property that relates a fixed point of the algorithm to the local minimum  conditions (see Thm.~\ref{thm:stationary}).
Finally, ISM has several computational advantages. For $t_{ISM}$ iterations, the calculation of $\Phi(W)$, and the ensuing eigendecomposition yields a complexity of $O(t_{ISM}( n^2 d^2 +  d^3))$. Since $q\ll d$, various approximation methods \cite{vladymyrov2016variational}\cite{richtarikgeneralized}\cite{lei2016coordinate} can be employed to find the few eigenvectors. For example, the Coordinate-wise Power Method\cite{lei2016coordinate}, approximates the most dominant eigenvalue at $O(d)$ time, reducing ISM's complexity to $O(t_{ISM} n^2 d^2)$. This improvement is further confirmed experimentally (see Figure \ref{fig:ND_vs_time}). Lastly, $t_{ISM}$ is magnitudes smaller than both $t_{DG}$ and $t_{SM}$. In general $t_{ISM} < 10$, while $t_{SM} > 50$ and $t_{DG} > 200$.

\subsection{Convergence Guarantees}

As mentioned above, the selection of the eigenvectors corresponding to the \emph{smallest} eigenvalues of $\Phi(W_k)$ is crucial for the establishment of a stationary point. Namely, we establish the following theorem:
\begin{theorem}\label{thm:stationary}
  For large enough $\sigma$ (satisfying Inequality \eqref{ineq:large_sigma_orig}), a fixed point $W^*$ of  Algorithm~\ref{masteralg} satisfies the necessary conditions of a local minimum of \eqref{eq:main_cost_function} if $\Phi(W^*)$ is full rank.
\end{theorem}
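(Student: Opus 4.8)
The plan is to verify the first- and second-order necessary conditions for a local minimum of \eqref{eq:main_cost_function} at a fixed point $W^*$, treating the first-order part as essentially immediate and concentrating the effort on the second-order part. For the first-order (KKT) conditions, note that a fixed point satisfies $W^* = \mathop{\mathrm{eig}}_{\min}(\Phi(W^*))$, so its columns are orthonormal eigenvectors of $\Phi(W^*)$; hence $W^{*T}W^* = I$ (feasibility of \eqref{eq:mainconstr}) and $\Phi(W^*)W^* = W^*\Lambda^*$ with $\Lambda^*$ the diagonal matrix of the $q$ smallest eigenvalues. By Lemma~\ref{lemma:eig} this is exactly the stationarity equation \eqref{eq:balance}, so $(W^*,\Lambda^*)$ is a KKT pair. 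The substance lies in the second-order condition: the Hessian of the Lagrangian \eqref{eq:lag} must be positive semidefinite on the tangent space of the Stiefel manifold at $W^*$, i.e.\ on directions $Z\in\mathcal{R}^{d\times q}$ with $W^{*T}Z + Z^TW^* = 0$ (equivalently, $W^{*T}Z$ skew-symmetric).

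First I would compute this Hessian as a quadratic form by differentiating \eqref{eq:lag} twice along $W^*+tZ$. Writing $g_{i,j}=\exp(-\tmop{Tr}(W^{*T}A_{i,j}W^*)/(2\sigma^2))$, a direct calculation yields
\begin{align*}
\nabla^2_W\mathcal{L}(W^*,\Lambda^*)[Z,Z] &= \tmop{Tr}(Z^T\Phi(W^*)Z) - \tmop{Tr}(\Lambda^* Z^TZ)\\
&\quad - \tfrac{1}{\sigma^4}\textstyle\sum_{i,j}\gamma_{i,j}\,g_{i,j}\bigl(\tmop{Tr}(Z^TA_{i,j}W^*)\bigr)^2,
\end{align*}
where the first two terms collect the constraint Hessian together with the leading term of $\nabla^2 F$, and the last term (the second-order term of $\nabla^2 F$, cf.\ \eqref{eq:phi_equation}) carries an extra factor $1/\sigma^2$ relative to the first two.

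Next I would diagonalize $\Phi(W^*)=V\Sigma V^T$ with ordered eigenvalues $\sigma_1\le\cdots\le\sigma_d$, so $W^*=[v_1,\ldots,v_q]$ and $\Lambda^*=\mathrm{diag}(\sigma_1,\ldots,\sigma_q)$, and expand $Z=VC$. The spectral part then becomes $\sum_{l=1}^{q}\sum_{m=1}^{d}(\sigma_m-\sigma_l)C_{ml}^2$. The tangent-space constraint forces the leading $q\times q$ block of $C$ to be skew-symmetric, which cancels all terms with $m\le q$ pairwise (pairing $(m,l)$ with $(l,m)$) and annihilates the diagonal, leaving $\sum_{l\le q}\sum_{m>q}(\sigma_m-\sigma_l)C_{ml}^2$. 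Because $W^*$ uses the \emph{smallest} eigenvalues, $\sigma_m-\sigma_l\ge 0$ whenever $m>q\ge l$, so the spectral part is non-negative on the tangent space. This is precisely the step that would fail for a largest-eigenvalue selection, and it is the crux of why $\mathop{\mathrm{eig}}_{\min}$ is the correct choice.

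The main obstacle is the indefinite correction term, whose sign is uncontrolled since $\gamma_{i,j}$ may be positive or negative. Here I would first observe that, decomposing $Z$ along $W^*$ and its orthogonal complement $W^*_\perp$, the factor $\tmop{Tr}(Z^TA_{i,j}W^*)$ depends only on the perpendicular component $E=W_\perp^{*T}Z$, since the in-subspace skew-symmetric part contributes $\tmop{Tr}(\text{skew}\cdot\text{sym})=0$. Thus the correction is bounded by $C_1\sigma^{-4}\|E\|^2$ for a $\sigma$-independent constant $C_1$ (using $g_{i,j}\le 1$ and Cauchy--Schwarz), while the spectral part is bounded below by $(\sigma_{q+1}-\sigma_q)\|E\|^2$. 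The role of the full-rank hypothesis on $\Phi(W^*)$, together with the fixed-point structure $\Phi(W^*)\approx\sigma^{-2}\sum_{i,j}\gamma_{i,j}A_{i,j}$ for large $\sigma$, is to secure a strictly positive eigengap $\sigma_{q+1}-\sigma_q$ of order $\sigma^{-2}$, so that the $\Theta(\sigma^{-2})$ spectral lower bound dominates the $O(\sigma^{-4})$ correction and the whole form is PSD. The delicate bookkeeping — reconciling the eigengap, the constant $C_1$, and the explicit large-$\sigma$ threshold of Inequality~\eqref{ineq:large_sigma_orig} so that the spectral term strictly dominates — is where I expect the real work to lie.
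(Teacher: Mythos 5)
Your proposal is correct and follows essentially the same route as the paper's proof: the same Hessian quadratic form $\tmop{Tr}(Z^T\Phi(W^*)Z)-\tmop{Tr}(\Lambda^* Z^TZ)$ minus the $\sigma^{-4}$ correction, the same tangent-space decomposition of $Z$ into a skew-symmetric in-subspace block (which cancels in the spectral part and vanishes in the correction via $\tmop{Tr}(\text{sym}\cdot\text{skew})=0$), the same Cauchy--Schwarz bound, and the same conclusion that the eigengap $\min_i(\bar{\Lambda^*}_i)-\max_j(\Lambda^*_j)$ — nonnegative precisely because $\mathop{\mathrm{eig}}_{\min}$ is used — dominates the correction once Inequality \eqref{ineq:large_sigma_orig} holds. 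Your eigenbasis coordinates $C$ are exactly the paper's $P_{W^*}$ and $P_{\bar{W^*}}$ in different notation, so the two arguments coincide step for step.
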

\begin{proof}
The main body of the proof is organized into a series of lemmas proved in the supplement.
Our first auxiliary lemma (from~\cite{wright1999numerical}), establishes conditions necessary for a stationary point of the Lagrangian to  constitute local minimum. 
\begin{lemma} \label{lemma:2nd_order}
  [Nocedal,Wright, Theorem 12.5~{\cite{wright1999numerical}}] (2nd Order Necessary Conditions)
    Consider the optimization problem:
  $ \min_{W : h (W) = 0} f (W), $
where $f : \mathbb{R}^{d \times q} \to \mathbb{R}$ and $h :
  \mathcal{R}^{d \times q} \to \mathbb{R}^{q \times q}$ are twice continuously
  differentiable. Let   $\mathcal{L}$ be the Lagrangian of this optimization problem. Then, a local minimum must satisfy the following  conditions:
  \begin{subequations}
    \begin{align} 
    &\nabla_W \mathcal{L} (W^{\ast}, \Lambda^{\ast}) = 0, \label{eq:1st_W}\\
    &\nabla_{\Lambda} \mathcal{L} (W^{\ast}, \Lambda^{\ast}) = 0, \label{eq:1st_lambda}\\
    \begin{split}
    \tmop{Tr} ( Z^T &\nabla_{W W}^2 \mathcal{L}(W^{\ast}, \Lambda^{\ast}) Z) \geq 0 \\&\tmop{for}   \tmop{all} Z \neq 0 , \tmop{with} 
       \nabla h (W^{\ast})^T Z = 0.   \label{eq:2nd_W}
       \end{split}
    \end{align}   
  \end{subequations}
\end{lemma}
Armed with this result, we next characterize the properties of a fixed point of Algorithm \ref{masteralg}:
\begin{lemma} \label{basic_lemma}
  Let $W^{\ast}$ be a fixed point of Algorithm \ref{masteralg}. Then it satisfies:
   $ \Phi ( W^{\ast}) W^{\ast} = W^{\ast} \Lambda^{\ast},$
  where $\Lambda^{\ast} \in \mathcal{R}^{q \times q}$ is a diagonal matrix
  containing the $q$ smallest eigenvalues of $\Phi ( W^{\ast})$ and
  $W^{\ast^T} W^{\ast} = I. $
\end{lemma}
The proof can be found in Appendix \ref{proof_of_lemma_2}.
Our next result, whose proof is  in Appendix \ref{proof_of_lemma_3}, states that 
 a fixed point satisfies the 1st order conditions of Lemma \ref{lemma:2nd_order}.  
\begin{lemma} \label{lemma:eig}
  If $W^{\ast}$ is a fixed point and $\Lambda^{\ast}$ is as defined in Lemma~\ref{basic_lemma},
  then $W^{\ast}$, $\Lambda^*$ satisfy the 1st order conditions (\ref{eq:1st_W})(\ref{eq:1st_lambda}) of Lemma \ref{lemma:2nd_order}.
\end{lemma}
Our last lemma, whose proof is in Appendix \ref{proof_of_lemma_4}, establishes that a fixed point satisfies the 2nd order conditions of Lemma \ref{lemma:2nd_order}, for large enough $\sigma$.  
\begin{lemma} \label{lemma:2nd_order_lemma}
  If $W^{\ast}$ is a fixed point, $\Lambda^{\ast}$ is as defined in Lemma~\ref{basic_lemma}, and $\Phi(W^*)$ is full rank,
  then given a large enough $\sigma$ (satisfying Inequality \eqref{ineq:large_sigma_orig}), $W^{\ast}$ and $\Lambda^*$ satisfy the 2nd order condition (\ref{eq:2nd_W}) of Lemma \ref{lemma:2nd_order}.
\end{lemma}
 Thm.~\ref{thm:stationary} therefore follows.
\end{proof}

Thm.~\ref{thm:stationary} is stated in terms of a large enough $\sigma$; we can characterize this constraint precisely. In the proof of Lemma~\ref{lemma:2nd_order_lemma} we establish the following condition on $\sigma$:
\begin{multline}
   \sigma^2 [\min_i ( \bar{\Lambda^*}_i) -\max_j(\Lambda_j^*)] \geq \\
    \sum_{i, j} \frac{|\gamma_{i, j}|}{\sigma^2} e^{-
   \frac{\tmop{Tr} ( ( W^{*^T} A_{i, j} W^* \nobracket)}{2 \sigma^2}}\tmop{Tr} (A^T_{i,j}A_{ij}). \label{ineq:large_sigma_orig}
\end{multline}
%
%
Here, $\Lambda^*$ is the set of $q$ smallest eigenvalues of $\Phi(W)$, and $\bar{\Lambda^*}$ is the set of the remaining eigenvalues. The left-hand side (LHS) of the equation further motivates ISM's choice of  eigenvectors corresponding to the $q$ smallest eigenvalues. This selection guarantees that the LHS of the inequality is positive. Therefore, given a large enough $\sigma$, Inequality \eqref{ineq:large_sigma_orig} and the 2nd order condition is satisfied. 

Furthermore, this equation provides a reasonable suggestion for the value of $q$.  Since we wish to maximize the term $(\min_i ( \bar{\Lambda^*}_i) -\max_j(\Lambda_j^*))$ to satisfy the inequality, the value $q$ should be set where this gap is maximized. More formally, we will defined
\begin{align}
   \delta_{gap}=\min_i ( \bar{\Lambda^*}_i) -\max_j(\Lambda_j^*)
   \label{eq:eiggap}.
\end{align}
as the eigengap.



\subsection{Spectral Initialization via Taylor Approximation}
\label{sec:initialization}
ISM admits a natural initialization point, constructed via a Taylor approximation of the objective. As we show experimentally in Section~\ref{sec:exp}, this initialization is a contribution in its own right: it improves both clustering quality and convergence time for ISM \emph{as well as} competitor algorithms.  To obtain a good initialization, observe that by using the 2nd order Taylor approximation of the objective function \eqref{eq:mainobj} at $W=0$, the Lagrangian can be approximated by 
\begin{align*}
\tilde{\mathcal{L}}(W,\Lambda) \approx & -\textstyle \sum_{i, j} \gamma_{i, j}  \left( 1 - \frac{\tmop{Tr}
   (W^T A_{i, j} W)}{2 \sigma^2} \right)\\
   & + \frac{1}{2} \tmop{Tr} (\Lambda (I -
   W^T W)). 
\end{align*}
  Setting $\nabla_W \tilde{\mathcal{L}}(W,\Lambda)=0$ reduces the problem into a simple eigendecomposition, namely, the one defined by the system 
$\left[ \textstyle \sum_{i, j} \frac{\gamma_{i, j}}{\sigma^2} A_{i,
   j} \right] W = W \Lambda .$
Hence, the 2nd order Taylor approximation of the original cost objective has a closed form global minimum that can be used as an initialization point, namely:
\begin{align}
   \textstyle W_{\mathrm{init}}=\mathrm{eig}_{\min}( \sum_{i, j} {\gamma_{i, j}} A_{i,
   j}/{\sigma^2})\label{eq:winit}.
\end{align}
We use this spectral initialization (SI) in the first master iteration of KDAC. In subsequent master iterations, $W_0$ (the starting point of ISM) is set to be the last value to which ISM converged to previously.

\section{Experimental Results}\label{sec:exp}
We experimentally validate the performance of ISM in terms of both speed and clustering quality. The source code is publicly available at github\footnote{https://github.com/neu-spiral/ISM}. Because \cite{niu2014iterative} has already performed extensive comparisons of KDAC against other alternative clustering methods, this section will concentrate on comparing  ISM to competing models for optimizing KDAC: Dimensional Growth (DG)~\cite{niu2014iterative} and gradient descent on the Stiefel Manifold (SM)~\cite{wen2013feasible}. SM is a generic approach, while DG is the approach originally proposed to solve KDAC~\cite{niu2014iterative}. 

In addition to introducing a new algorithm for optimizing KDAC, we also proposed an intelligent initialization scheme based on Taylor approximation in Section \ref{sec:initialization}, we call Spectral Initialization (SI).  We also investigate how SI effects the performance of the various algorithms compared to the standard random initialization (RI).

\begin{table}[h]
\tiny
\centering
\setlength\tabcolsep{0.2pt}
    \begin{tabular}{c|c|c|c|c|c}
    \textbf{SG} 
        & NMI       $\uparrow$
        & CQ        $\uparrow$
        & Novelty   $\downarrow$
        & Cost      $\downarrow$
        & Time      $\downarrow$\\
    \midrule
        ISM SI
        	& \textbf{1} 
        	& \textbf{2} 
        	& \textbf{0} 
        	& -1.2
        	& \textbf{0.02} \\
        ISM RI
        	& 0.4$\pm$0.49
        	& 1.6$\pm$0.478
        	& \textbf{0.0$\pm$0.0} 
        	& -1.48$\pm$0.381
        	& 0.04$\pm$0.01\\
        SM SI
        	& \textbf{1} 
        	& 1.99
        	& \textbf{0} 
        	& \textbf{-1.791} 
        	& 0.404\\
        SM RI
        	& \textbf{1$\pm$0.0} 
        	& 1.79$\pm$0.398
        	& \textbf{0.0$\pm$0.0} 
        	& -1.59$\pm$0.398
        	& 1.47$\pm$1.521\\
        DG SI
        	& 0
        	& 1.629
        	& 1
        	& -1.316
        	& 1.732\\
        DG RI
        	& 0.93$\pm$0.196
        	& 0.99$\pm$0.0
        	& 0.03$\pm$0.104
        	& -0.99$\pm$0.0
        	& 1.51$\pm$0.102\\
    \midrule
    \textbf{LG} \\
        ISM SI
        	& \textbf{1} 
        	& \textbf{1.9}
        	& \textbf{0} 
        	& 509
        	& 0.239\\
        ISM RI
        	& 0.8$\pm$0.4
        	& \textbf{1.9$\pm$0.011}
        	& \textbf{0.0$\pm$0.0} 
        	& 605$\pm$193
        	& \textbf{0.18$\pm$0.038} \\
        SM SI
        	& \textbf{1} 
        	& 1.9
        	& \textbf{0} 
        	& 509
        	& 0.413\\
        SM RI
        	& 0.4$\pm$0.49
        	& 1.9$\pm$0.031
        	& 0.6$\pm$0.49
        	& 1095$\pm$479
        	& 11.21$\pm$11.01\\
        DG SI
        	& \textbf{1} 
        	& \textbf{1.9} 
        	& \textbf{0} 
        	& 509.975
        	& 1595.174\\
        DG RI
        	& 0.1$\pm$0.32
        	& 0.61$\pm$0.17
        	& 0.9$\pm$0.32
        	& \textbf{101.5$\pm$30.8} 
        	& 1688$\pm$551\\
     \midrule
    \textbf{Moon} \\       	
        ISM SI
        	& \textbf{1} 
        	& \textbf{2} 
        	& \textbf{0} 
        	& \textbf{149} 
        	& \textbf{0.575} \\
        ISM RI
        	& 0.4$\pm$0.4
        	& \textbf{2.0$\pm$0.0} 
        	& 0.5$\pm$0.5
        	& 269.98$\pm$43
        	& 3.07$\pm$1.4\\
        SM SI
        	& \textbf{1} 
        	& 1.998
        	& \textbf{0} 
        	& \textbf{149} 
        	& 2.653\\
        SM RI
        	& 0.27$\pm$0.335
        	& \textbf{2.0$\pm$0.0} 
        	& 0.5$\pm$0.5
        	& 277$\pm$18
        	& 258$\pm$374\\
        DG SI
        	& \textbf{1} 
        	& 1.998
        	& \textbf{0} 
        	& 149.697
        	& 359.188\\
        DG RI
        	& 0.09$\pm$0.3
        	& 1.27$\pm$0.6
        	& 0.8$\pm$0.4
        	& 161.65$\pm$102
        	& 3212$\pm$1368\\
    \midrule
    \textbf{MoonN} \\       	       	
        ISM SI
        	& \textbf{1} 
        	& \textbf{2} 
        	& \textbf{0} 
        	& \textbf{15.3} 
        	& \textbf{0.3} \\
        ISM RI
        	& 0.91$\pm$0.2
        	& \textbf{2.0$\pm$0} 
        	& 0.04$\pm$0.1
        	& 15.59$\pm$0.8
        	& 0.55$\pm$0.2\\
        SM SI
        	& \textbf{1} 
        	& \textbf{2} 
        	& \textbf{0} 
        	& \textbf{15.3} 
        	& 0.452\\
        SM RI
        	& 0.22$\pm$0.4
        	& \textbf{2.0$\pm$0} 
        	& 0.72$\pm$0.4
        	& 17.7$\pm$1
        	& 102$\pm$48.5\\
        DG SI
        	& \textbf{1} 
        	& \textbf{2} 
        	& \textbf{0} 
        	& 15.4
        	& 3836.3\\
        DG RI
        	& 0$\pm$0.0
        	& 1.6$\pm$0.04
        	& 1.0$\pm$0.0
        	& 16.2$\pm$0.06
        	& 3588$\pm$304\\
    \midrule
    \textbf{Flower} \\       	
        ISM SI
        	& \textbf{-} 
        	& \textbf{0.41} 
        	& \textbf{0} 
        	& 20
        	& \textbf{0.04}\\
        ISM RI
        	& \textbf{-} 
        	& \textbf{0.41$\pm$0.0} 
        	& \textbf{0.0$\pm$0.0} 
        	& 19.51$\pm$0.0
        	& 0.1$\pm$0.01 \\
        SM SI
        	& \textbf{-} 
        	& \textbf{0.41} 
        	& \textbf{0} 
        	& 20
        	& 0.153\\
        SM RI
        	& \textbf{-} 
        	& 0.3$\pm$0.2
        	& 0.01$\pm$0.012
        	& \textbf{0.7$\pm$0.5} 
        	& 0.8$\pm$0.6\\
        DG SI
        	& \textbf{-} 
        	& \textbf{0.41} 
        	& \textbf{0} 
        	& 20
        	& 27.251\\
        DG RI
        	& \textbf{-} 
        	& 0.35$\pm$0.02
        	& 0.2$\pm$0.4
        	& 20.6$\pm$1
        	& 37.0$\pm$5\\
    \midrule
    \textbf{Faces} \\       	       	       	
        ISM SI
        	& \textbf{0.57} 
        	& \textbf{0.61} 
        	& 0.004
        	& 59.6
        	& \textbf{1.5} \\
        ISM RI
        	& 0.56$\pm$0.002
        	& \textbf{0.61$\pm$0.0} 
        	& \textbf{0.0$\pm$0.0} 
        	& 59.6$\pm$0.118
        	& 1.51$\pm$0.146\\
        SM SI
        	& 0.562
        	& 0.608
        	& 0.004
        	& 59.559
        	& 7.96\\
        SM RI
        	& \textbf{0.57$\pm$0.002} 
        	& \textbf{0.61$\pm$0} 
        	& 0.004$\pm$0
        	& 59.7$\pm$0.1
        	& 109$\pm$35.2\\
        DG SI
        	& 0.564
        	& 0.6
        	& 0.004
        	& 59.8
        	& 100591.071\\
        DG RI
        	& 0.458$\pm$0.045
        	& 0.565$\pm$0.018
        	& 0.024$\pm$0.018
        	& \textbf{54.28$\pm$3} 
        	& 166070$\pm$4342\\
    \midrule
    \textbf{WebKb} \\
        ISM SI
        	& \textbf{0.37} 
        	& 0.286
        	& \textbf{0} 
        	& -0.273
        	& 231\\
        ISM RI
        	& 0.29$\pm$0.07
        	& 0.54$\pm$0.2
        	& 0.01$\pm$0.002
        	& -0.52$\pm$0.2
        	& 137.45$\pm$15.7\\
        SM SI
        	& 0.048
        	& \textbf{3.296} 
        	& 0.034
        	& \textbf{-3.159} 
        	& \textbf{9.945} \\
        SM RI
        	& 0.33$\pm$0.06
        	& 0.12$\pm$0.005
        	& 0.008$\pm$0.004
        	& -0.11$\pm$0.004
        	& 13511$\pm$13342\\
        DG SI
        	& 0.048
        	& 1.066
        	& 0.034
        	& -1.019
        	& 199887.134\\
        DG RI
        	& 0.23$\pm$0
        	& 1.06$\pm$0
        	& 0.1$\pm$0.005
        	& 0.616$\pm$0.04
        	& 727694$\pm$41068\\    
\end{tabular}
\caption{
 The Normalized Mutual Information, Clustering Quality, and clustering novelty are abbreviated in the first 3 columns as NMI, CQ, and Novelty. The cost of the objective and run time is displayed in the last two columns. For each optimization technique, spectral initialization (SI) and random initialization (RI) are separately tested. With RI, 10 random initial points have been tested with their mean and std displayed. } \label{experiment_table_and_figures}
\end{table}

\begin{figure}[!t] 
  \centering
  \includegraphics[width=7.7cm,height=8.7cm]{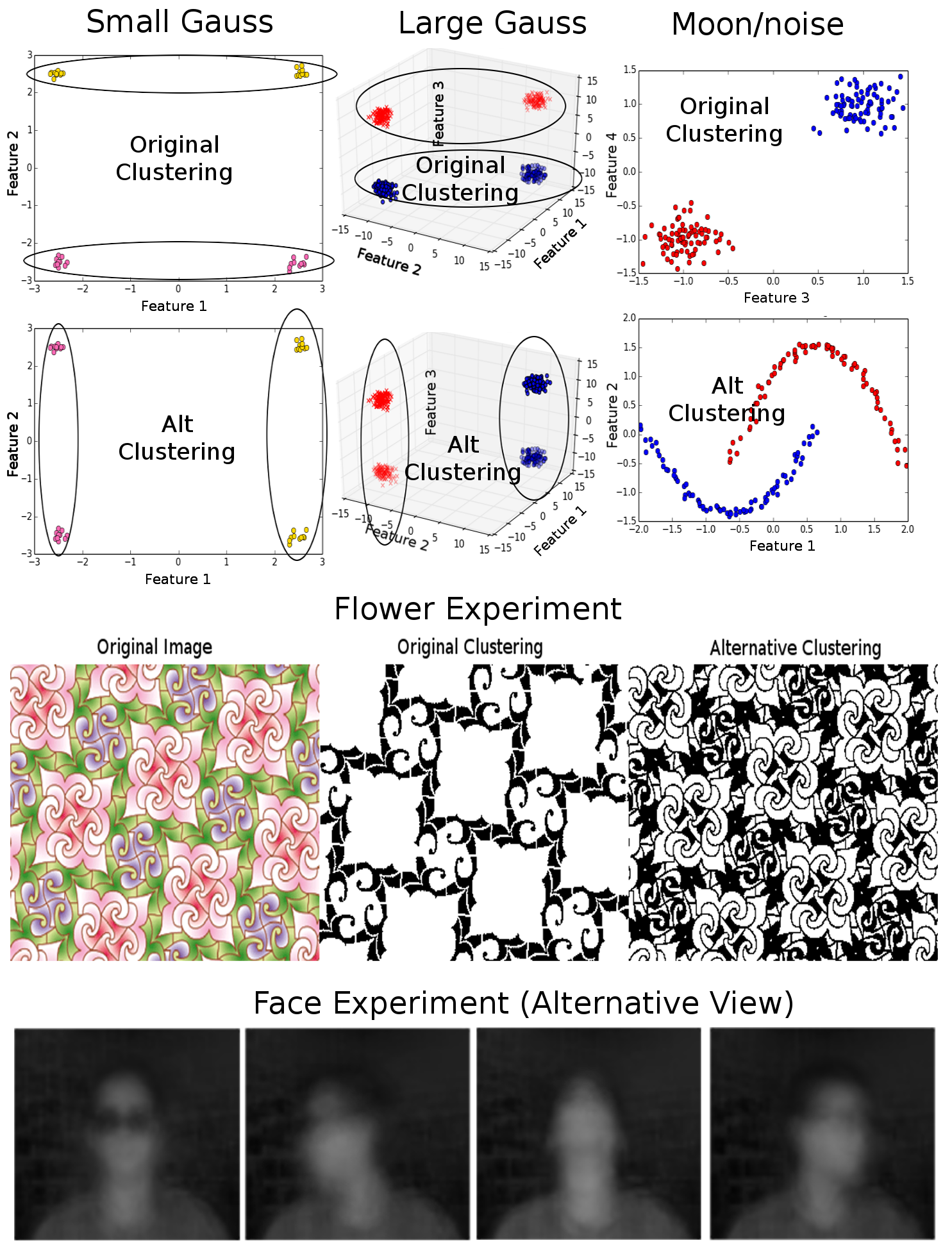} 
  \caption{ Figure includes all original and alternative displayable clusterings. The face data is originally clustered by the identity of the individual. Here the average image of the alternative clustering is displayed. It is observable that the original and alternative clusters are all visually obvious clusters and provide alternative views.} 
  \label{fig:results_figure}
\end{figure}

\begin{figure}[!t] 
  \centering
  \includegraphics[width=0.7\columnwidth]{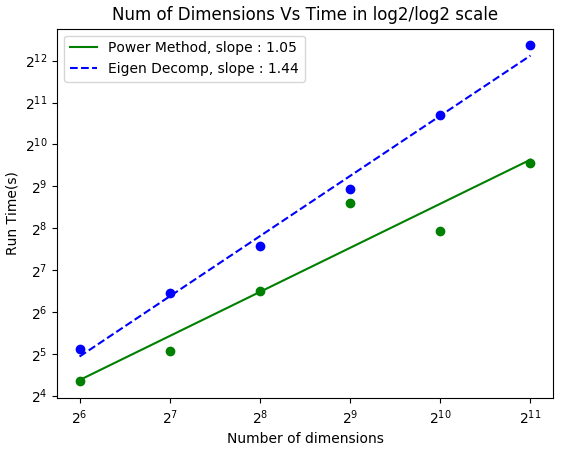}
  \caption{Growth in dimension vs time in log/log scale. A slope of 1 is linear growth.}
  \label{fig:ND_vs_time}
\end{figure}

\noindent\textbf{Datasets.} We perform experiments on four synthetic and three real datasets.  The synthetic data are displayed in Figure \ref{fig:results_figure}. Small Gaussian (SG) contains four Gaussian clusters with $40$ samples and two features shown at the top left of the figure. When this data is projected down to feature 2, the original clustering is created. Rotating this projection onto feature 1, the alternative clustering emerges. Large Gaussian (LG) contains four Gaussian clusters with $1000$ samples and four dimensions is shown in the top center location. The Gaussian clusters are rotated by $45$ degrees to reside in 3D, and the fourth dimension is generated from a uniform noise distribution. This synthetic data is designed to test the response of KDAC within a noisy environment.  
We generate two additional synthetic datasets shown in top right: Moon and Moon+Noise (MoonN). Both datasets have the first two dimensions as two parabolas and the second two dimensions as Gaussian clusters. The MoonN dataset further includes three noisy dimensions generated from a uniform distribution with 1000 samples.
Since the Gaussian clusters have a compact structure and the parabolas have a non-linear structure, these datasets demonstrate KDAC's ability to handle mixed clustering structures in a uniformly noisy environment.
Due to the novelty of alternative clustering, there has been very few public repository data that have at least two alternative labels. 
The two traditional benchmark datasets are CMU's WebKB dataset~\cite{cmu4universities} and face images from the UCI KDD repository~\cite{bay2000uci}.  CMU's WebKB dataset consists of 1041 html pages from 4 universities. One labelling is based on universities, and an alternative labelling based on topic (course, faculty, project, student). After preprocessing by removing rare and stop words, we are left with 500 words. 
The face dataset consists of $640$ images from $20$ people in four different poses. Each image has 32x30 pixels. Images are vectorized and PCA is then used to further condense the dataset by keeping 85\% of the variance, resulting in a dataset of 624 samples and 20 features. 
We set the existing clustering based on identity, and
seek an alternative clustering   based on pose. The last real dataset is the Flower image by Alain Nicolas \cite{particleoft}, a 350x256 pixel image. The RGB values of each pixel is taken as a single sample, with repeated samples removed. This results in a dataset of 256 samples and 3 features.
Although this dataset does not have labels, the quality of the alternative clustering can be visually observed. With the exception of the high dimensional WebKB data, the remaining datasets are visualized in Figure \ref{experiment_table_and_figures} (b).

\noindent\textbf{Hyperparameter Tuning.}
ISM has 3 hyperparameters that require tuning, $q$, $\lambda$, and $\sigma$. The hyperparameter $q$ with a potential range of (0,$d$) is initially set to $k$, the number of clusters. A grid search was then conducted for $\lambda \in (0,10]$ and $\sigma \in (0,10]$ to find the highest CQ ($\lambda$, $\sigma$) pair that satisfies inequality (\ref{ineq:large_sigma_orig}) at an increment of 0.01. In the event that Inequality (\ref{ineq:large_sigma_orig}) cannot be satisfied, the highest CQ closest to satisfying (\ref{ineq:large_sigma_orig}) is used. At this point, since $\Phi(W^*)$ is computed, the $q$ value that maximizes the eigengap as defined in Eq(\ref{eq:eiggap}) can be used if (\ref{ineq:large_sigma_orig}) is not yet satisfied. Note that competing models do not have a natural way of selecting hyperparameters. Since all competing models optimize the same objective, we report all results using the same hyperparameters. To ensure reproducible experiments, the hyperparameters utilized in each experiment is provided in Appendix \ref{App:Hyperparameters}.

\noindent\textbf{Evaluation Method.} 
To exhaustively evaluate ISM on KDAC, both external and internal measures have been recorded in Table~\ref{experiment_table_and_figures}. More specifically, column 1 and 3 (NMI and Novelty) are external measures because they compare alternative cluster assignments against an externally known ground truth. To make the comparison, Normalized Mutual Information (NMI) as suggested by \cite{strehl33knowledge} was used.  The NMI is a measure between 0 to 1 with 0 denoting no relationship and 1 as maximum relationship. Column 1 (NMI) is calculated by computing the NMI between the ground truth and the alternative cluster assignments. Column 3 (Novelty) is calculated by computing the NMI between the alternative cluster assignments against the original label. Ideally, we wish for the NMI of the alternative cluster assignments against the ground truth to be 1, and 0 against the original clustering.  If we let $U$ and $L$ be two clustering assignments, NMI can be calculated with $NMI(L,U) = \frac{I(L,U)}{\sqrt{H(L)H(U)}}$, where $I(L,U)$ is the mutual information between $L$ and $U$, and $H(L)$ and $H(U)$ are the entropies of $L$ and $U$ respectively. Lastly, since no ground truth exists for the Flower dataset, the NMI field is not applicable and indicated with a dash (-).

Columns 2, 4 and 5 (CQ, Cost, Time) are internal measures used to evaluate ISM. They are internal measures since they are computed solely from the data and the algorithm with no external knowledge applied during the comparison. The clustering quality (CQ) is computed with $HSIC(XW,U)$ with $U$ as the clustering solution. This is equivalent to the spectral clustering objective with high values denoting high clustering quality. The cost quality records the objective function of KDAC in Eq. (\ref{eq:main_objective}). The last column, Time, measures the execution time of the entire KDAC algorithm in seconds on an Intel Xeon E7 processor.

In Table~\ref{experiment_table_and_figures}, we report the performance of the various methods: our ISM, SM, and DG, paired with two initialization schemes: our spectral initialization (SI) and random initialization (RI).  For random initialization, we repeat these 10 times and report the mean and standard deviation for each measure.  The optimal direction of each measure is denoted by the $\uparrow\downarrow$, with $\uparrow$ denoting a preference towards larger values and $\downarrow$ otherwise. For each field, the optimal result is printed in bold font.

\noindent\textbf{Performance Comparison Results.} 
Table \ref{experiment_table_and_figures} illustrates that ISM with SI outperforms its competitors in both internal and external quality measure for the first 3 columns. Since it is possible for the objective cost to achieve a low cost with a trivial solution based on different $(\sigma, \lambda)$ pairs, the objective should be low, but it is not always indicative of better clustering quality. This is demonstrated in the case of WebKb dataset, where SM+SI achieved a lower cost with a faster convergence. However, upon inspecting the clustering allocation in this case, we observed that it was a trivial classification, with almost all points in the same cluster.  Since the demand for a faster KDAC was the original motivator, special attention should be paid to the Time column. The execution time of ISM significantly outperforms the original approach (DG) as well as the standard approach (SM). This speed improvement is especially true in the Faces dataset where ISM improves the speed by 5 folds.

\noindent\textbf{The Effect of Spectral Initialization.}
 By comparing SI against RI, we can isolate the effect of our proposed spectral initialization (SI). Comparing the rows, the spectral initialization improved both time and clustering quality for all methods. From this observation, we conclude that SI contributes to clustering quality by starting each algorithm at a desirable initialization. The convergence improvement came from placing each algorithm closer to the local minimum. Comparing all methods when using SI, we observed that ISM optimization still outperformed other algorithms in terms of time. From this, we conclude that the proposed SI has a greater impact on the clustering quality while the ISM optimization technique contribute towards faster convergence.

\noindent\textbf{Scalability.}
Note that KDAC consists of optimizing $U$ and $W$ from Eq. (\ref{eq:main_objective}). Since the computational bottleneck resided in the optimization of $W$, ISM was designed to speed up this portion. The scalability analysis, therefore, will concentrate on only the optimization of $W$. ISM has a computational complexity of $O(t_{ISM}(n^2d^2+d^3)$ and the complexity can be divided into the calculation of the derivative and the eigendecomposition of $\Phi(W)$. Although the derivative contributes to $O(n^2d^2)$, it is a highly parallelizable operation that can be rendered trivial through the usage of GPUs. Therefore, ISM's true bottleneck is not the number of samples ($n$), but the number of dimensions due to the $O(d^3)$ operation of eigendecomposition. As the dimensionality of the data increase, the complexity grows at a cubic rate. Yet, while this growth of dimensionality may exacerbate algorithms such as SM, the negative influence on ISM is limited. This is because the $O(d^3)$ term from SM came from a matrix inversion while ISM uses a spectral method. Since only very few eigenvalues are required, there exists many approximation algorithms to speed up the eigendecomposition \cite{vladymyrov2016variational} \cite{richtarikgeneralized}\cite{lei2016coordinate}.  To demonstrate this point, Coordinate-wise Power Method (CPM) \cite{lei2016coordinate} was implemented and compared to the eigendecomposition operation from Numpy. In this experiment, noisy dimensions of uniform distributions are added to the synthetic dataset of Large Gauss (LG) such that the total dimension increase in multiples of 2. As the dimension of the data grows exponentially, the time of execution is recorded in Figure \ref{fig:ND_vs_time} in log scale. Given a log/log scale, a slope of 1 is linear while a slope of 2 is quadratic. From studying the slope of each operation, the eigendecomposition from Numpy grows at a rate between linear and quadratic. However, by utilizing CPM, the growth becomes roughly linear. Therefore, the refinement of utilizing a spectral optimization method is a key reason for speed improvement.

\section{Conclusions}

We have proposed an iterative spectral optimization technique for solving a non-convex optimization problem while being constrained on a Stiefel manifold. This new technique demonstrates speed improvement for any algorithm that could be formulated into Eq (\ref{eq:main_cost_function}). The ISM algorithm is easy to implement with existing software libraries. Due to the usage of the spectral method, approximation techniques from existing research could be deployed to further speed up the eigendecomposition. Accompanied with the algorithm are the theoretical guarantees that satisfy the first and second order necessary conditions for a local minimum. Besides these guarantees, we also proposed a natural initialization point based on Taylor approximation of the original cost function. Experiments on synthetic and real data confirmed that our proposed spectral initialization improved the performance of all the optimization algorithms, ISM, SM and DG. Simultaneously, the experiments demonstrate that our proposed ISM algorithm had the best convergence rate compared to competing models with up to 5 folds of speed improvement. Although we focus this paper on an important application of ISM, alternative clustering, the optimization algorithm proposed and guarantees can be extended to other optimization problems involving Gaussian-kernel like objectives constrained over the Stiefel manifold, which is a common formulation for dimensionality reduction with a kernel-based objective. 
Understanding whether ISM can be applied to such objectives and be leveraged to solve broader classes of problems is a natural future direction for this work.

\subsubsection*{Acknowledgements}
We would like to acknowledge support for this project from the NSF grant IIS-1546428. We would also like to thank Yale Chang for his insightful discussions.

\begin{small}
\bibliographystyle{unsrt}
\bibliography{reference}

\begin{thebibliography}{10}

\bibitem{gondek2007non}
David Gondek and Thomas Hofmann.
\newblock Non-redundant data clustering.
\newblock {\em Knowledge and Information Systems}, 12(1):1--24, 2007.

\bibitem{cui2010learning}
Ying Cui, Xiaoli~Z Fern, and Jennifer~G Dy.
\newblock Learning multiple nonredundant clusterings.
\newblock {\em ACM Transactions on Knowledge Discovery from Data (TKDD)},
  4(3):15, 2010.

\bibitem{dang2010generation}
Xuan~Hong Dang and James Bailey.
\newblock Generation of alternative clusterings using the cami approach.
\newblock In {\em Proceedings of the 2010 SIAM International Conference on Data
  Mining}, pages 118--129. SIAM, 2010.

\bibitem{davidson2008finding}
Ian Davidson and Zijie Qi.
\newblock Finding alternative clusterings using constraints.
\newblock In {\em Data Mining, 2008. ICDM'08. Eighth IEEE International
  Conference on}, pages 773--778. IEEE, 2008.

\bibitem{cui2007non}
Ying Cui, Xiaoli~Z Fern, and Jennifer~G Dy.
\newblock Non-redundant multi-view clustering via orthogonalization.
\newblock In {\em Data Mining, 2007. ICDM 2007. Seventh IEEE International
  Conference on}, pages 133--142. IEEE, 2007.

\bibitem{niu2014iterative}
Donglin Niu, Jennifer~G Dy, and Michael~I Jordan.
\newblock Iterative discovery of multiple alternativeclustering views.
\newblock {\em IEEE transactions on pattern analysis and machine intelligence},
  36(7):1340--1353, 2014.

\bibitem{wen2013feasible}
Zaiwen Wen and Wotao Yin.
\newblock A feasible method for optimization with orthogonality constraints.
\newblock {\em Mathematical Programming}, 142(1-2):397--434, 2013.

\bibitem{caruana2006meta}
Rich Caruana, Mohamed Elhawary, Nam Nguyen, and Casey Smith.
\newblock Meta clustering.
\newblock In {\em Data Mining, 2006. ICDM'06. Sixth International Conference
  on}, pages 107--118. IEEE, 2006.

\bibitem{jain1999data}
Anil~K Jain, M~Narasimha Murty, and Patrick~J Flynn.
\newblock Data clustering: a review.
\newblock {\em ACM computing surveys (CSUR)}, 31(3):264--323, 1999.

\bibitem{dasgupta2010mining}
Sajib Dasgupta and Vincent Ng.
\newblock Mining clustering dimensions.
\newblock In {\em Proceedings of the 27th International Conference on Machine
  Learning (ICML-10)}, pages 263--270, 2010.

\bibitem{mansinghka2009cross}
Vikash~K Mansinghka, Eric Jonas, Cap Petschulat, Beau Cronin, Patrick Shafto,
  and Joshua~B Tenenbaum.
\newblock Cross-categorization: A method for discovering multiple overlapping
  clusterings.
\newblock In {\em Nonparametric Bayes Workshop at NIPS}, 2009.

\bibitem{niu2012nonparametric}
Donglin Niu, Jennifer Dy, and Zoubin Ghahramani.
\newblock A nonparametric bayesian model for multiple clustering with
  overlapping feature views.
\newblock In {\em Artificial Intelligence and Statistics}, pages 814--822,
  2012.

\bibitem{poon2010variable}
Leonard Poon, Nevin~L Zhang, Tao Chen, and Yi~Wang.
\newblock Variable selection in model-based clustering: To do or to facilitate.
\newblock In {\em Proceedings of the 27th International Conference on Machine
  Learning (ICML-10)}, pages 887--894, 2010.

\bibitem{bae2006coala}
Eric Bae and James Bailey.
\newblock Coala: A novel approach for the extraction of an alternate clustering
  of high quality and high dissimilarity.
\newblock In {\em Data Mining, 2006. ICDM'06. Sixth International Conference
  on}, pages 53--62. IEEE, 2006.

\bibitem{qi2009principled}
ZiJie Qi and Ian Davidson.
\newblock A principled and flexible framework for finding alternative
  clusterings.
\newblock In {\em Proceedings of the 15th ACM SIGKDD international conference
  on Knowledge discovery and data mining}, pages 717--726. ACM, 2009.

\bibitem{dang2010hierarchical}
Xuan-Hong Dang and James Bailey.
\newblock A hierarchical information theoretic technique for the discovery of
  non linear alternative clusterings.
\newblock In {\em Proceedings of the 16th ACM SIGKDD international conference
  on Knowledge discovery and data mining}, pages 573--582. ACM, 2010.

\bibitem{gretton2005measuring}
Arthur Gretton, Olivier Bousquet, Alex Smola, and Bernhard Sch{\"o}lkopf.
\newblock Measuring statistical dependence with hilbert-schmidt norms.
\newblock In {\em International conference on algorithmic learning theory},
  pages 63--77. Springer, 2005.

\bibitem{von2007tutorial}
Ulrike Von~Luxburg.
\newblock A tutorial on spectral clustering.
\newblock {\em Statistics and computing}, 17(4):395--416, 2007.

\bibitem{vladymyrov2016variational}
Max Vladymyrov and Miguel Carreira-Perpinan.
\newblock The variational nystrom method for large-scale spectral problems.
\newblock In {\em International Conference on Machine Learning}, pages
  211--220, 2016.

\bibitem{richtarikgeneralized}
Peter Richt{\'a}rik.
\newblock Generalized power method for sparse principal component analysis.

\bibitem{lei2016coordinate}
Qi~Lei, Kai Zhong, and Inderjit~S Dhillon.
\newblock Coordinate-wise power method.
\newblock In {\em Advances in Neural Information Processing Systems}, pages
  2064--2072, 2016.

\bibitem{wright1999numerical}
Stephen Wright and Jorge Nocedal.
\newblock Numerical optimization.
\newblock {\em Springer Science}, 35:67--68, 1999.

\bibitem{cmu4universities}
CMU CMU.
\newblock universities webkb data, 1997.
\newblock 4.

\bibitem{bay2000uci}
Stephen~D Bay, Dennis Kibler, Michael~J Pazzani, and Padhraic Smyth.
\newblock The uci kdd archive of large data sets for data mining research and
  experimentation.
\newblock {\em ACM SIGKDD Explorations Newsletter}, 2(2):81--85, 2000.

\bibitem{particleoft}
Particles of tessellations.
\newblock \url{http://en.tessellations-nicolas.com/}.
\newblock Accessed: 2017-04-25.

\bibitem{strehl33knowledge}
A~Strehl and J~Chosh.
\newblock Knowledge reuse framework for combining multiple partitions.
\newblock {\em Journal of Machine learning Research}, 33(3):583--617.

\end{thebibliography}
\end{small}

\newpage
\onecolumn




\begin{appendices}
\section{Derivation for Equation \ref{eq:main_cost_function} }
Given the objective function,
\[ \begin{array}{ll}
     \max & \tmop{HSIC} ( X W \nocomma, U) - \lambda \tmop{HSIC} ( X W, Y)\\
     U, W & \\
     s.t & W^T W = I \nocomma, U^T U = I.
   \end{array} \]
Using the HSIC measure defined, the objective function can be
rewritten as
\[ \begin{array}{lll}
     \tmop{HSIC} ( X W \nocomma, U) - \lambda \tmop{HSIC} ( X W, Y) & = &
     \tmop{Tr} ( H \nocomma U U^T H D^{\frac{-1}{2}} K_{XW} D^{\frac{-1}{2}}) - \lambda \tmop{Tr} ( H Y Y^T H
     D^{\frac{-1}{2}} K_{XW} D^{\frac{-1}{2}})\\
     & = & \tmop{Tr} ( D^{\frac{-1}{2}} H ( U U^T - \lambda Y Y^T) H D^{\frac{-1}{2}} K_{XW})\\
     & = & \tmop{Tr} ( \gamma K_{XW})\\
     & = & \sum_{i, j} \gamma_{i, j} K_{X_{i, j}}.
   \end{array}  \]
where $\gamma$ is a symmetric matrix and $\gamma = H ( U U^T - \lambda Y Y^T)
H$. By substituting the Gaussian kernel for $K_{X_{i, j}}$, the objective
function becomes
\[ \begin{array}{l}
     \min\\
     W
   \end{array} - \sum_{i, j} \gamma_{i, j} e^{- \frac{\tmop{Tr} [ W^T A_{i, j}
   W]}{2 \sigma^2}} \begin{array}{llll}
     &  & s.t & W^T W = I.
   \end{array} \]
\label{Cost_Derivation}
\end{appendices}

\begin{appendices} 
    \section{Proof for Lemma \ref{basic_lemma} }
    \begin{proof}
        Algorithm \ref{masteralg} sets the smallest $q$ eigenvectors of $\Phi ( W_k)$ as
        $W_{k + 1}$. Since a fixed point $W^{\ast}$ is reached when $W_k = W_{k + 1}$,
        therefore $W^{\ast}$ consists of the smallest eigenvectors of $\Phi (
        W^{\ast})$ and $\Lambda^{\ast}$ corresponds with a diagonal matrix of eigenvavlues.
        Since the eigenvectors of $\Phi ( W^{\ast})$ are orthonormal
        , $W^{\ast^T} W^{\ast} = I$ is also satisfied.
    \end{proof}
    \label{proof_of_lemma_2}
\end{appendices}

\begin{appendices}
\section{Proof for Lemma \ref{lemma:eig} }
\begin{proof}

    Using Equation (\ref{eq:main_cost_function}) as the objective function, the corresponding Lagrangian and its gradient is written as 
    
    \begin{align}
      \mathcal{L} ( W, \Lambda) &= - \sum_{i, j} \gamma_{i, j} e^{- \frac{\tmop{Tr}
      (W^T A_{i, j} W)}{2 \sigma^2}} - \frac{1}{2} \tmop{Tr} ( \Lambda ( W^T W -
      I)) \label{eq:lagrangian},
    \end{align}
   and 
   \begin{equation} \label{eq:gradient_of_lagrangian}
    \nabla_W \mathcal{L} ( W, \Lambda) = \sum_{i, j} \frac{\gamma_{i,
      j}}{\sigma^2} e^{- \frac{\tmop{Tr} (W^T A_{i, j} W)}{2 \sigma^2}} A_{i, j} W
      - W \Lambda. 
    \end{equation}

  By setting the gradient of the Lagrangian to zero, and using the definition of $\Phi(W)$ from Equation (\ref{eq:phi_equation}), Equation (\ref{eq:gradient_of_lagrangian}) can be written as
    \begin{equation}\label{eig_relation} 
     \Phi(W) W = W \Lambda. 
    \end{equation} 
    The gradient with respect to $\Lambda$ is
    
    \begin{align}
        \nabla_\Lambda \mathcal{L} ( W, \Lambda) =  W^T W - I.
    \end{align}
    
    Setting this gradient of the Lagrangian also to zero, condition (\ref{eq:1st_lambda}) is equivalent to
     
    \begin{equation} 
        W^T W = I. \label{prof:orth}
    \end{equation}
     
     By Lemma \ref{basic_lemma}, a fixed point $W^*$ and its corresponding $\Lambda^*$ satisfy (\ref{eig_relation}) and (\ref{prof:orth}), and the lemma follows.
\end{proof}
\label{proof_of_lemma_3}
\end{appendices}

\begin{appendices}
\section{Proof for Lemma \ref{lemma:2nd_order_lemma} }

The proof for Lemma \ref{lemma:2nd_order_lemma} relies on the following three sublemmas. The first two sublemmas demonstrate how the 2nd order conditions can be rewritten into a simpler form.  
With the simpler form, the third lemma demonstrates how the 2nd order conditions of a local minimum are satisfied given a large enough $\sigma$.

\begin{sublemma}\label{lemma:directional_derivate}

Let the directional derivative in the direction of $Z$ be defined as 
\begin{equation}
  \mathcal{D} f (W) [Z] \assign \begin{array}{l}
    \lim\\
    t \rightarrow 0
  \end{array} \frac{f (W + tZ) - f (W)}{t}.
\end{equation}

Then the 2nd order condition of Lemma \ref{lemma:2nd_order_lemma} can be written as 

\begin{equation} \label{cond:large_inequality}
\tmop{Tr} ( Z^T \mathcal{D} \nabla \mathcal{L} [ Z]) = \left\{ \sum_{i, j}
   \frac{\gamma_{i, j}}{\sigma^2} e^{- \frac{\tmop{Tr} ( ( W^{*^T} A_{i, j} W^*
   \nobracket)}{2 \sigma^2}}  \left[ \tmop{Tr} ( Z^T A_{i, j} Z) -
   \frac{1}{\sigma^2} \tmop{Tr} ( Z^T A_{i, j} W^*)^2 \right] \right\} -
   \tmop{Tr} ( Z^T Z \Lambda^*),
\end{equation}

for all $Z$ such that 
\begin{equation} \label{cond:Zcondition}
Z^T W^* + W^{*^T} Z = 0.
\end{equation}

\end{sublemma}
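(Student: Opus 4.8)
The plan is to recognize that the quadratic form $\tmop{Tr}(Z^T \nabla_{WW}^2 \mathcal{L}(W^*,\Lambda^*) Z)$ appearing in the second-order condition \eqref{eq:2nd_W} is precisely $\tmop{Tr}(Z^T \mathcal{D}\nabla\mathcal{L}[Z])$, since the action of the Hessian on a direction $Z$ is obtained by taking the directional derivative of the gradient $\nabla_W\mathcal{L}$ along $Z$. Hence the bulk of the argument is to differentiate the gradient \eqref{eq:gradient_of_lagrangian} once more, in the direction $Z$, evaluate at the fixed point, and then translate the tangent-space constraint $\nabla h(W^*)^T Z = 0$ of Lemma~\ref{lemma:2nd_order} into the stated form \eqref{cond:Zcondition}.

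First I would compute $\mathcal{D}\nabla\mathcal{L}[Z]$ term by term. Writing $\phi_{i,j}(W) = \tmop{Tr}(W^T A_{i,j} W)$ and using that each $A_{i,j} = (x_i - x_j)(x_i - x_j)^T$ is symmetric, the chain rule yields $\mathcal{D}\phi_{i,j}(W)[Z] = 2\tmop{Tr}(Z^T A_{i,j} W)$, so the directional derivative of the exponential factor is $-\tfrac{1}{\sigma^2} e^{-\phi_{i,j}/(2\sigma^2)}\tmop{Tr}(Z^T A_{i,j}W)$. Applying the product rule to the matrix-valued summand $e^{-\phi_{i,j}/(2\sigma^2)}\, A_{i,j} W$ (exponential factor times the linear factor $A_{i,j}W$), and differentiating the constraint term $-W\Lambda$ to $-Z\Lambda$, I obtain a matrix expression for $\mathcal{D}\nabla\mathcal{L}[Z]$ at $(W^*,\Lambda^*)$.

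Next I would left-multiply by $Z^T$ and take the trace. The linear factor contributes $\tmop{Tr}(Z^T A_{i,j} Z)$; the term from differentiating the exponential contributes the scalar product $-\tfrac{1}{\sigma^2}\tmop{Tr}(Z^T A_{i,j}W^*)\cdot\tmop{Tr}(Z^T A_{i,j}W^*) = -\tfrac{1}{\sigma^2}[\tmop{Tr}(Z^T A_{i,j}W^*)]^2$; and the constraint term contributes $-\tmop{Tr}(Z^T Z\Lambda^*)$. Collecting these over all $(i,j)$ reproduces exactly the right-hand side of \eqref{cond:large_inequality}. For the constraint itself, the equality constraint is $h(W) = W^T W - I = 0$, whose directional derivative is $\mathcal{D}h(W^*)[Z] = Z^T W^* + W^{*^T} Z$; requiring it to vanish is precisely the tangent condition $\nabla h(W^*)^T Z = 0$, i.e.\ \eqref{cond:Zcondition}.

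I expect the main obstacle to be bookkeeping rather than any conceptual difficulty: one must apply the product rule to the matrix-valued gradient carefully, keeping the scalar factor $\tmop{Tr}(Z^T A_{i,j}W^*)$ (produced by differentiating the exponential) separate from the matrix factor $A_{i,j}W^*$, so that after tracing against $Z^T$ it cleanly becomes a squared trace. The symmetry of $A_{i,j}$ is the single structural fact that makes the factor of $2$ cancel against the $2\sigma^2$ and produces the clean expression; I would flag this explicitly to justify the collapse of matrix products into scalar traces.
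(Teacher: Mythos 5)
Your proposal is correct and follows essentially the same route as the paper: identify the Hessian quadratic form with $\tmop{Tr}(Z^T\mathcal{D}\nabla\mathcal{L}[Z])$, apply the product rule to the matrix-valued gradient summand (splitting into the term from differentiating the exponential, which yields the squared trace $-\tfrac{1}{\sigma^2}\tmop{Tr}(Z^TA_{i,j}W^*)^2$ after tracing, the term from the linear factor $A_{i,j}Z$, and the term $-Z\Lambda$ from the constraint), and translate $\nabla h(W^*)^TZ=0$ into $Z^TW^*+W^{*^T}Z=0$ by differentiating $h(W)=W^TW-I$. The paper's proof is exactly this computation, including your key observation that the symmetry of $A_{i,j}$ produces the factor of $2$ that cancels against $2\sigma^2$.
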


\begin{proof}
Observe first that 
\begin{equation}
    \nabla^2_{W^* W^*} \mathcal{L} ( W^*, \Lambda^*) Z = 
    \mathcal{D} \nabla \mathcal{L} [ Z],
\end{equation}

where the directional derivative of the gradient $\mathcal{D} \nabla \mathcal{L} [ Z]$
is given by 

\[ \mathcal{D} \nabla \mathcal{L} [ Z] = \begin{array}{l}
     \lim\\
     t \rightarrow 0
   \end{array} \frac{\partial}{\partial t} \sum_{i, j} \frac{\gamma_{i,
   j}}{\sigma^2} e^{- \frac{\tmop{Tr} ( ( W^* + t Z)^T A_{i, j}  ( W^* + t Z))}{2
   \sigma^2}} A_{i, j}  ( W^* + t Z) - ( W^* + t Z) \Lambda. \]

This can be written as
\[ \mathcal{D} \nabla \mathcal{L} [ Z] = T_1 + T_2 - T_3, \]
where
\begin{align}
    T_1 &= \begin{array}{l}
        \lim\\ t \rightarrow 0
        \end{array} \frac{\partial}{\partial t} \sum_{i, j} \frac{\gamma_{i,
        j}}{\sigma^2} e^{- \frac{\tmop{Tr} ( ( W^* + t Z)^T A_{i, j}  ( W^* + t Z))}{2
        \sigma^2}} A_{i, j} W^*\\
    &= \begin{array}{l}
        \lim\\
        t \rightarrow 0
        \end{array} \frac{\partial}{\partial t} \sum_{i, j} \frac{\gamma_{i,
        j}}{\sigma^2} e^{- \frac{\tmop{Tr} ( ( W^{*^T} A_{i, j} W^* + t Z^T A_{i, j} W^* +
        t W^{*^T} A_{i, j} Z + t^2 Z^T A_{i, j} Z \nobracket)}{2 \sigma^2}} A_{i, j} W^*  \\
    &= - \sum_{i, j} \frac{\gamma_{i, j}}{2 \sigma^4} e^{- \frac{\tmop{Tr} (
        ( W^{*^T} A_{i, j} W^* \nobracket)}{2 \sigma^2}} \tmop{Tr} ( Z^T A_{i, j} W^* + W^{*^T}
        A_{i, j} Z) A_{i, j} W^* \\
    &= - \sum_{i, j} \frac{\gamma_{i, j}}{\sigma^4} e^{- \frac{\tmop{Tr} (
   ( W^{*^T} A_{i, j} W^* \nobracket)}{2 \sigma^2}} \tmop{Tr} ( Z^T A_{i, j} W^*)
   A_{i, j} W^* &    \text{as $A_{i, j}=A_{i,j}^T$},\\
    T_2 &= \begin{array}{l}
     \lim\\
     t \rightarrow 0
   \end{array} \frac{\partial}{\partial t} \sum_{i, j} \frac{\gamma_{i,
   j}}{\sigma^2} t e^{- \frac{\tmop{Tr} ( ( W^* + t Z)^T A_{i, j}  ( W^* + t
   Z))}{2 \sigma^2}} A_{i, j} Z\\
   &= \sum_{i, j} \frac{\gamma_{i, j}}{\sigma^2} e^{- \frac{\tmop{Tr} (W^{*^T}
   A_{i, j} W^*)}{2 \sigma^2}} A_{i, j} Z, \\
    T_3 &= \begin{array}{l} \lim\\ t \rightarrow 0 \end{array} \frac{\partial}{\partial t}  ( W^* + t Z) \Lambda\\
    &= Z \Lambda.
\end{align}

Hence, putting all three terms together yields

\begin{equation}\label{eq:directional_gradient}
\mathcal{D} \nabla \mathcal{L} [ Z] = \left\{ \sum_{i, j} \frac{\gamma_{i,
   j}}{\sigma^2} e^{- \frac{\tmop{Tr} ( ( W^{*^T} A_{i, j} W^* \nobracket)}{2
   \sigma^2}}  \left[ A_{i, j} Z - \frac{1}{\sigma^2} \tmop{Tr} ( Z^T A_{i, j}
   W^*) A_{i, j} W^* \right] \right\} - Z \Lambda.
\end{equation}

Hence,

\begin{equation}
    \tmop{Tr} (Z^T \nabla^2_{W^* W^*} \mathcal{L} ( W^*, \Lambda^*) Z) = 
    \tmop{Tr} ( Z^T \mathcal{D} \nabla \mathcal{L} [ Z]) ,
\end{equation}

\begin{equation} \label{trace_2nd_condition}
   = \left\{ \sum_{i, j}
   \frac{\gamma_{i, j}}{\sigma^2} e^{- \frac{\tmop{Tr} ( ( W^{*^T} A_{i, j} W^*
   \nobracket)}{2 \sigma^2}}  \left[ \tmop{Tr} ( Z^T A_{i, j} Z) -
   \frac{1}{\sigma^2} \tmop{Tr} ( Z^T A_{i, j} W^*)^2 \right] \right\} -
   \tmop{Tr} ( Z^T Z \Lambda_W).   
\end{equation}

Next, let $Z$ be such that $Z\neq 0$ and $\nabla h (W^{\ast})^T Z = 0$, where
\begin{equation}
h ( W^*) = W^{*^T} W^* - I.
\end{equation}

Therefore, the constraint condition can be written on $Z$ in (\ref{eq:2nd_W}) can be written as

\begin{align}
\begin{split}
\nabla h ( W^*)^T Z &= \begin{array}{l} \lim\\
     t \rightarrow 0
   \end{array} \frac{\partial}{\partial t}  \frac{( W^* + t Z)^T ( W^* + t Z) -
   W^{*^T} W^*}{t}\\
    &= Z^T W^* + W^{*^T} Z = 0. \label{Zconclusion}
\end{split}
\end{align}

Using Equations (\ref{trace_2nd_condition}) and (\ref{Zconclusion})  lemma \ref{lemma:directional_derivate} follows.
\end{proof}

Recall from Lemma \ref{basic_lemma} that $W^*$ consists
of the $q$ eigenvectors \ of $\Phi (W^*)$ with the smallest eigenvalues. We define $\bar{W^*} \in \mathcal{R}^{d \times d - q}$ as all other eigenvectors of $\Phi(W^*)$. \ Because $Z$ has the same dimension as $W^*$, each
column of $Z$ resides in the space of $\mathcal{R}^d$. Since the eigenvectors
of $\Phi (W^*)$ span $\mathcal{R}^d$, each column of $Z$ can be represented
as a linear combination of the eigenvectors of $\Phi (W^*)$. In other words, each
column $z_i$ can therefore be written as $z_i = W^*P^{(i)}_W + \bar{W^*}
P^{(i)}_{\bar{W^*}}$, where $P^{(i)}_{W^*} \in \mathcal{R}^{q \times 1}$ and \
$P^{(i)}_{\bar{W^*}} \in \mathcal{R}^{d - q \times 1}$ \ represents the
coordinates for the two sets of eigenvectors. Using the same notation, we also define $\Lambda^* \in
\mathcal{R}^{q \times q}$ as the eigenvalues corresponding to $W^*$ and
$\bar{\Lambda^*} \in \mathcal{R}^{d - q \times d - q}$ as the eigenvalues
corresponding to $\bar{W^*}$. The entire matrix $Z$ can therefore be
represented as 
\begin{equation} \label{def:Z}
  Z = \bar{W^*} P_{\bar{W^*}} + W^*P_{W^*}.
\end{equation}

Furthermore, it can be easily shown that $P_{W^*}$ is a skew symmetric matrix, or $- P_{W^*} = P_{W^*}^T$. By  setting $Z$ from Equation (\ref{cond:Zcondition}) into  (\ref{def:Z}), the constraint can be rewritten as

\begin{align}
[ P_{\bar{W^*}}^T \bar{W^*}^T + P_W^{*^T} W^{*^T}] W^* + W^{*^T} [ \bar{W^*} P_{\bar{W^*}} + W^* P_{W^*}] &= 0.
\end{align}

Simplifying the equation yields the relationship

\begin{equation}
P_W^{*^T} + P_{W^*} = 0.
\end{equation}

Using these definitions, we define the following sublemma.

\begin{sublemma}\label{lemma:To_final_form}
Given a fixed point $W^*$ and a $Z$ satisfying condition (\ref{cond:Zcondition}), the condition 
$\tmop{Tr} ( Z^T \mathcal{D} \nabla \mathcal{L} [ Z]) \geq 0$ is equivalent to 
\begin{equation} \label{eq:inequality}
   \tmop{Tr} ( P_{\bar{W^*}}^T \bar{\Lambda^*} P_{\bar{W^*}}) - \tmop{Tr} (
   P_{\bar{W^*}} \Lambda^* P_{\bar{W^*}}^T) \geq C_2,
\end{equation}
where 
\begin{equation} \label{def:C2}
   C_2 = \sum_{i, j} \frac{\gamma_{i, j}}{\sigma^4} e^{-
   \frac{\tmop{Tr} ( ( W^{*^T} A_{i, j} W^* \nobracket)}{2 \sigma^2}} \tmop{Tr} (
   Z^T A_{i, j} W^*)^2,
\end{equation}

$P_{W^*}, P_{\bar{W^*}}$ are given by Equation (\ref{def:Z}), and $\Lambda^*,\bar{\Lambda^*}$ are the diagonal matrices containing the bottom and top eigenvalues of $\Phi(W^*)$ respectively.

\end{sublemma}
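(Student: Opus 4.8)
The plan is to start from the expression for $\tmop{Tr}(Z^T \mathcal{D} \nabla \mathcal{L}[Z])$ furnished by Sublemma~\ref{lemma:directional_derivate} and massage it into the two-block form of \eqref{eq:inequality}. The first observation is that the scalar weights multiplying $A_{i,j}$ are exactly those defining $\Phi(W^*)$ in \eqref{eq:phi_equation}: writing $c_{i,j} = \frac{\gamma_{i,j}}{\sigma^2} e^{-\tmop{Tr}(W^{*^T} A_{i,j} W^*)/(2\sigma^2)}$, we have $\sum_{i,j} c_{i,j} A_{i,j} = \Phi(W^*)$, so by linearity $\sum_{i,j} c_{i,j}\tmop{Tr}(Z^T A_{i,j} Z) = \tmop{Tr}(Z^T \Phi(W^*) Z)$. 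Moreover, after absorbing the extra $1/\sigma^2$, the second summand inside the braces equals $C_2$ of \eqref{def:C2} and enters with a minus sign. Hence the target claim $\tmop{Tr}(Z^T \mathcal{D}\nabla\mathcal{L}[Z]) \geq 0$ reduces to showing that $\tmop{Tr}(Z^T \Phi(W^*) Z) - \tmop{Tr}(Z^T Z \Lambda^*)$ equals the left-hand side of \eqref{eq:inequality}; the full-rank hypothesis is not needed here, as this sublemma is a purely algebraic restatement.

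Next I would substitute the decomposition $Z = \bar{W^*} P_{\bar{W^*}} + W^* P_{W^*}$ from \eqref{def:Z}. Since the columns of $W^*$ and $\bar{W^*}$ are eigenvectors of $\Phi(W^*)$, we have $\Phi(W^*) W^* = W^* \Lambda^*$ and $\Phi(W^*) \bar{W^*} = \bar{W^*} \bar{\Lambda^*}$; combined with the orthonormality relations $W^{*^T} W^* = I$, $\bar{W^*}^T \bar{W^*} = I$, and $W^{*^T} \bar{W^*} = 0$, all cross terms vanish and both traces split cleanly into a $W^*$-block and a $\bar{W^*}$-block:
\[
\tmop{Tr}(Z^T \Phi(W^*) Z) = \tmop{Tr}(P_{\bar{W^*}}^T \bar{\Lambda^*} P_{\bar{W^*}}) + \tmop{Tr}(P_{W^*}^T \Lambda^* P_{W^*}),
\]
\[
\tmop{Tr}(Z^T Z \Lambda^*) = \tmop{Tr}(P_{\bar{W^*}}^T P_{\bar{W^*}} \Lambda^*) + \tmop{Tr}(P_{W^*}^T P_{W^*} \Lambda^*).
\]

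The key step, which is also what justifies isolating the $\bar{W^*}$-block, is the cancellation of the two $W^*$-block terms, i.e. $\tmop{Tr}(P_{W^*}^T \Lambda^* P_{W^*}) = \tmop{Tr}(P_{W^*}^T P_{W^*} \Lambda^*)$. This is precisely where the skew-symmetry $P_{W^*}^T = -P_{W^*}$ (established from constraint \eqref{cond:Zcondition} in the discussion preceding the sublemma) and the diagonality of $\Lambda^*$ enter. Denoting the diagonal entries of $\Lambda^*$ by $\lambda^*_1,\dots,\lambda^*_q$ and expanding entrywise, the first trace equals $\sum_{i,j} \lambda^*_j (P_{W^*})_{j,i}^2$ and the second equals $\sum_{i,j} \lambda^*_i (P_{W^*})_{j,i}^2$; since $(P_{W^*})_{j,i}^2 = (P_{W^*})_{i,j}^2$ by skew-symmetry, relabelling $i \leftrightarrow j$ shows their difference is its own negative and hence zero. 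Finally, the cyclic property of the trace gives $\tmop{Tr}(P_{\bar{W^*}}^T P_{\bar{W^*}} \Lambda^*) = \tmop{Tr}(P_{\bar{W^*}} \Lambda^* P_{\bar{W^*}}^T)$, so that $\tmop{Tr}(Z^T \mathcal{D}\nabla\mathcal{L}[Z]) = \tmop{Tr}(P_{\bar{W^*}}^T \bar{\Lambda^*} P_{\bar{W^*}}) - \tmop{Tr}(P_{\bar{W^*}} \Lambda^* P_{\bar{W^*}}^T) - C_2$, and the claimed equivalence with \eqref{eq:inequality} follows. I expect the entrywise skew-symmetry cancellation to be the only genuinely nontrivial step; everything else is bookkeeping with the eigen-relations, orthonormality, and the cyclic trace identity.
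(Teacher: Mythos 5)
Your proposal is correct and follows essentially the same route as the paper's proof: both split $\tmop{Tr}(Z^T \mathcal{D}\nabla\mathcal{L}[Z])$ into the three terms $C_1$, $C_2$, $C_3$ from Sublemma~\ref{lemma:directional_derivate}, substitute the decomposition $Z = \bar{W^*}P_{\bar{W^*}} + W^*P_{W^*}$, invoke the eigen-relations and orthonormality to split each trace into blocks, and cancel the $W^*$-block terms using skew-symmetry of $P_{W^*}$ together with the cyclic trace property. Your entrywise relabelling argument for the cancellation $\tmop{Tr}(P_{W^*}^T \Lambda^* P_{W^*}) = \tmop{Tr}(P_{W^*}^T P_{W^*}\Lambda^*)$ is just a more explicit (and cleaner) version of the paper's diagonal-elements observation, so no substantive difference remains.
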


\begin{proof}
By condition  (\ref{cond:large_inequality}), 
\begin{align}
    \tmop{Tr} ( Z^T \mathcal{D} \nabla \mathcal{L} [ Z]) &= C_1 - C_2 + C_3, \label{eq:41} 
\end{align}
where
\begin{align*}
    C_1  &= \tmop{Tr} \left( Z^T \sum_{i, j} \frac{\gamma_{i,
   j}}{\sigma^2} e^{- \frac{\tmop{Tr} ( ( W^{*^T} A_{i, j} W^* \nobracket)}{2
   \sigma^2}} A_{i, j} Z \right), \\
   C_2 &= \sum_{i, j} \frac{\gamma_{i, j}}{\sigma^4} e^{-
   \frac{\tmop{Tr} ( ( W^{*^T} A_{i, j} W^* \nobracket)}{2 \sigma^2}} \tmop{Tr} (
   Z^T A_{i, j} W^*)^2  \label{eq:43},\\
   C_3 &= - \tmop{Tr} ( Z^T Z \Lambda^*).
\end{align*}

$C_1$ can be written as 

\begin{align*}
 C_1 &= \tmop{Tr} \left( Z^T \sum_{i, j} \frac{\gamma_{i, j}}{\sigma^2} e^{-
   \frac{\tmop{Tr} ( ( W^{*^T} A_{i, j} W^* \nobracket)}{2 \sigma^2}} A_{i, j} Z
   \right) &\\
 &=\tmop{Tr} ( Z^T \Phi ( W^*) [ \bar{W^*} P_{\bar{W^*}} + W^* P_{W^*}]) &\\
 &=\tmop{Tr} ( Z^T [ \Phi ( W^*)  \bar{W^*} P_{\bar{W^*}} + \Phi ( W^*) W^* P_{W^*}])  & \\
 &=\tmop{Tr} ( Z^T [ \bar{W^*}  \bar{\Lambda} P_{\bar{W^*}} + W^* \Lambda P_{W^*}]) 
    & \text{By definition of eigenvalues.} \\
 &= \tmop{Tr} ( [ P_{\bar{W^*}}^T \bar{W^*} ^T + P_W^{*^T} W^{*^T} ] [ \bar{W^*} 
   \bar{\Lambda} P_{\bar{W^*}} + W^* \Lambda P_{W^*}]) & \text{Substitute for $Z$} \\
 &=\tmop{Tr} ( P_{\bar{W^*}}^T  \bar{\Lambda} P_{\bar{W^*}} ) + \tmop{Tr} ( P_{W^*}^T
   \Lambda P_W ) & \text{Given $W^{*^T}W^*=I$, $\bar{W^*}^TW^* =0 $}.
\end{align*}

Similarly
\begin{align*}
    C_3 &= - \tmop{Tr} ( Z^T Z \Lambda) \\
    &= - \tmop{Tr} ( [ P_{\bar{W^*}}^T \bar{W^*} ^T + P_{W^*}^T W^{*^T}] 
        [ \bar{W^*} P_{\bar{W^*}} + W^* P_{W^*}] \Lambda) \\
    &= - \tmop{Tr} ( [ P_{\bar{W^*}}^T P_{\bar{W^*}} + P_{W^*}^T P_{W^*}] \Lambda)\\
    &= - \tmop{Tr} ( P_{\bar{W^*}}^T P_{\bar{W^*}} \Lambda) - 
        \tmop{Tr} ( P_{W^*}^T P_{W^*} \Lambda).
\end{align*}

Because $P_{W^*}$ is a square skew symmetric matrix, the diagonal elements of $P_{W^*}
P_{W^*}^T$ is the same as the diagonal of $P_{W^*} P_{W^*}^T$. From this observation, we
conclude that $\tmop{Tr} ( P_{W^*} P_{W^*}^T \Lambda) = \tmop{Tr} ( P_{W^*}^T P_{W^*}
\Lambda)$. Hence,

\begin{align*}
C_3 = - \tmop{Tr} ( P_{\bar{W^*}} \Lambda P_{\bar{W^*}}^T ) -
   \tmop{Tr} ( P_{W^*}^T \Lambda P_{W^*} ).
\end{align*}

Putting all 3 parts together yields

\begin{align}
\begin{split}
    \tmop{Tr} ( Z^T \mathcal{D} \nabla \mathcal{L} [ Z]) &= \tmop{Tr} (
   P_{\bar{W^*}}^T  \bar{\Lambda} P_{\bar{W^*}} ) + \tmop{Tr} ( P_{W^*}^T \Lambda P_{W^*}
   ) - C_2 - \tmop{Tr} ( P_{\bar{W^*}} \Lambda P_{\bar{W^*}}^T ) -
   \tmop{Tr} ( P_{W^*}^T \Lambda P_{W^*} ) \\
   &=\tmop{Tr} ( P_{\bar{W^*}}^T \bar{\Lambda} P_{\bar{W^*}}) - \tmop{Tr} (
   P_{\bar{W^*}} \Lambda P_{\bar{W^*}}^T) - C_2 .
\end{split}
\end{align}

The 2nd order condition (\ref{eq:2nd_W}) is, therefore, satisfied, when   
\begin{equation} \label{eq:58}
   \tmop{Tr} ( P_{\bar{W^*}}^T \bar{\Lambda} P_{\bar{W^*}}) - \tmop{Tr} (
   P_{\bar{W^*}} \Lambda P_{\bar{W^*}}^T) \geq C_2.
\end{equation}

\end{proof}

\begin{sublemma} \label{large_enough_sigma}
    Given $W^*$,$\bar{W^*}$,$\bar{\Lambda^*}$, and $\Lambda^*$ 
    as defined in Equation (\ref{def:Z}), 
   if the corresponding smallest eigenvalue of $\bar{\Lambda^*}$ 
   is larger than the largest eigenvalue of $\Lambda^*$, then given a large enough  $\sigma$ the 
   condition (\ref{eq:2nd_W}) of Lemma \ref{lemma:2nd_order} is satisfied.
\end{sublemma}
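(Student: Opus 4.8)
The plan is to establish Inequality~(\ref{eq:58}) of Sublemma~\ref{lemma:To_final_form}, namely $\tmop{Tr}(P_{\bar{W^*}}^T\bar{\Lambda^*}P_{\bar{W^*}}) - \tmop{Tr}(P_{\bar{W^*}}\Lambda^*P_{\bar{W^*}}^T)\geq C_2$ for every admissible $Z$, since by that sublemma this is equivalent to the 2nd order condition~(\ref{eq:2nd_W}). I would treat the two sides separately: lower-bound the left-hand side by a positive multiple of $\|P_{\bar{W^*}}\|_F^2$, upper-bound $C_2$ by the \emph{same} quantity times a factor that vanishes as $\sigma\to\infty$, so that $\|P_{\bar{W^*}}\|_F^2$ cancels and the condition collapses to a scalar inequality in $\sigma$.

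First I would bound the left-hand side. Because $\bar{\Lambda^*}$ and $\Lambda^*$ are diagonal, expanding both traces entrywise gives $\tmop{Tr}(P_{\bar{W^*}}^T\bar{\Lambda^*}P_{\bar{W^*}}) \geq (\min_i\bar{\Lambda^*}_i)\|P_{\bar{W^*}}\|_F^2$ and $\tmop{Tr}(P_{\bar{W^*}}\Lambda^*P_{\bar{W^*}}^T)\leq(\max_j\Lambda^*_j)\|P_{\bar{W^*}}\|_F^2$, so the left-hand side is at least $\delta_{gap}\|P_{\bar{W^*}}\|_F^2$ with $\delta_{gap}=\min_i\bar{\Lambda^*}_i-\max_j\Lambda^*_j$, which is positive by hypothesis.

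The main obstacle, and the key step, is bounding $C_2$: the naive estimate introduces the unconstrained quantity $\|P_{W^*}\|_F^2$ through $\|Z\|_F^2=\|P_{\bar{W^*}}\|_F^2+\|P_{W^*}\|_F^2$, which cannot be controlled. I would resolve this by first simplifying $\tmop{Tr}(Z^TA_{i,j}W^*)$. Substituting $Z=\bar{W^*}P_{\bar{W^*}}+W^*P_{W^*}$ from~(\ref{def:Z}) splits it into two terms; the term $\tmop{Tr}(P_{W^*}^TW^{*T}A_{i,j}W^*)$ is the trace of a product of the skew-symmetric matrix $P_{W^*}^T$ (recall $P_{W^*}^T=-P_{W^*}$) and the symmetric matrix $W^{*T}A_{i,j}W^*$, hence vanishes. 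Thus $\tmop{Tr}(Z^TA_{i,j}W^*)=\tmop{Tr}(P_{\bar{W^*}}^T\bar{W^*}^TA_{i,j}W^*)$ depends on $P_{\bar{W^*}}$ alone. Applying Cauchy--Schwarz and submultiplicativity of the matrix norms, together with $\|W^*\|_2=\|\bar{W^*}\|_2=1$ (orthonormal columns), yields $\tmop{Tr}(Z^TA_{i,j}W^*)^2\leq\|P_{\bar{W^*}}\|_F^2\,\tmop{Tr}(A_{i,j}^TA_{i,j})$, and therefore, bounding each $\gamma_{i,j}$ by $|\gamma_{i,j}|$, $C_2\leq\|P_{\bar{W^*}}\|_F^2\,\frac{1}{\sigma^4}\sum_{i,j}|\gamma_{i,j}|e^{-\tmop{Tr}(W^{*T}A_{i,j}W^*)/(2\sigma^2)}\tmop{Tr}(A_{i,j}^TA_{i,j})$.

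Finally I would combine the two bounds. Since the factor $\|P_{\bar{W^*}}\|_F^2$ is common to both, Inequality~(\ref{eq:58}) holds for all admissible $Z$ as soon as $\delta_{gap}\geq\frac{1}{\sigma^4}\sum_{i,j}|\gamma_{i,j}|e^{-\tmop{Tr}(W^{*T}A_{i,j}W^*)/(2\sigma^2)}\tmop{Tr}(A_{i,j}^TA_{i,j})$, which is exactly Inequality~(\ref{ineq:large_sigma_orig}) after multiplying through by $\sigma^2$. Because the right-hand side of this inequality tends to $0$ as $\sigma\to\infty$, while the left-hand side $\sigma^2\delta_{gap}$ stays bounded away from $0$ (the rescaled matrix $\sigma^2\Phi(W^*)=\sum_{i,j}\gamma_{i,j}e^{-\tmop{Tr}(W^{*T}A_{i,j}W^*)/(2\sigma^2)}A_{i,j}$ converges to $\sum_{i,j}\gamma_{i,j}A_{i,j}$, whose relevant eigengap is positive under the hypothesis), a sufficiently large $\sigma$ makes the inequality hold, and with it condition~(\ref{eq:2nd_W}). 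This establishes the sublemma.
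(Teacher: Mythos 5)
Your proposal is correct and follows essentially the same route as the paper's own proof: the same vanishing of $\tmop{Tr}(A_{i,j}W^*P_{W^*}^TW^{*T})$ via the skew-symmetric/symmetric pairing, the same Cauchy--Schwarz bound on $C_2$, the same cancellation of the common factor $\tmop{Tr}(P_{\bar{W^*}}^TP_{\bar{W^*}})$, and the same rescaling by $\sigma^2$ followed by letting $\sigma\to\infty$. If anything, your closing remark that $\sigma^2\Phi(W^*)$ converges to $\sum_{i,j}\gamma_{i,j}A_{i,j}$ makes the limit argument slightly more explicit than the paper's, which only notes that $\Lambda^*$ scales like $1/\sigma^2$.
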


\begin{proof}
To proof sublemma (\ref{large_enough_sigma}), we provide bounds on each of the terms in \eqref{eq:58}. Starting with $C_2$ defined at (\ref{def:C2}). It has a trace term, $(\tmop{Tr}(Z^TA_{ij}W^*))^2$ that can be rewritten as 

\begin{equation}\label{eq:60}
  (\tmop{Tr}(A_{ij}W^*Z^T))^2 = (\tmop{Tr}(A_{ij}W^*P^T_{W^*}W^{*^T}+ A_{ij}W^*P^T_{\bar{W^*}}\bar{W^*}^T))^2.
\end{equation}

Since $A_{ij}$ is symmetric and $W^*P^T_{W^*}W^{*^T}$ is skew-symmetric, then
Tr$(A_{ij}W^*P^T_{W^*}W^{*^T})=0$. Hence 

\begin{align}
   (\tmop{Tr}(Z^TA_{ij}W^*))^2 & =  (\tmop{Tr}(A_{ij}W^*Z^T))^2 
    = (\tmop{Tr}(A_{ij}W^*P^T_{\bar{W^*}}\bar{W^*}^T))^2 \\
    &\leq \tmop{Tr}(A^T_{i,j}A_{ij})\tmop{Tr}(P^T_{\bar{W^*}}P_{\bar{W^*}}) 
\end{align}

where the last inequality follows from Cauchy-Schwartz inequality and that fact that $W^{*^T}W^*=I$ and $\bar{W^*}^T\bar{W^*}=I$.
Thus, $C_2$ in \eqref{eq:43} is bounded by
\begin{equation}\label{eq:c2bound}
   C_2 \leq  \sum_{i, j} \frac{|\gamma_{i, j}|}{\sigma^4} e^{-
   \frac{\tmop{Tr} ( ( W^{*^T} A_{i, j} W^* \nobracket)}{2 \sigma^2}} \tmop{Tr} (
    A^T_{i,j}A_{ij})\tmop{Tr}(P^T_{\bar{W^*}}P_{\bar{W^*}})
\end{equation}
Similarly, the remaining terms in \eqref{eq:41} can be bounded by
\begin{equation}
C_1= \tmop{Tr} ( P_{\bar{W^*}}^T \bar{\Lambda^*} P_{\bar{W^*}}) \geq \min_i
    ( \bar{\Lambda^*}_i) \tmop{Tr} ( P_{\bar{W^*}} P_{\bar{W^*}}^T) 
\end{equation}
\begin{equation}
 C_3 = -\tmop{Tr} ( P_{\bar{W^*}} \Lambda^* P^T_{\bar{W^*}}) \geq -
     \max_i ( \Lambda_i^*) \tmop{Tr} ( P_{\bar{W^*}}^T P_{\bar{W^*}}).
\end{equation}
     
Using the bounds for each term, the Equation (\ref{eq:58}) can be rewritten as
\begin{align}
    [\min_i ( \bar{\Lambda^*}_i) -\max_j(\Lambda_j^*)] \tmop{Tr}(P^T_{\bar{W^*}}P_{\bar{W^*}}) &\geq 
    \sum_{i, j} \frac{|\gamma_{i, j}|}{\sigma^4} e^{-
   \frac{\tmop{Tr} ( ( W^{*^T} A_{i, j} W^* \nobracket)}{2 \sigma^2}}
   \tmop{Tr} (A^T_{i,j}A_{ij})
   \tmop{Tr}(P^T_{\bar{W^*}}P_{\bar{W^*}})\\
   [\min_i ( \bar{\Lambda^*}_i) -\max_j(\Lambda_j^*)] &\geq 
    \sum_{i, j} \frac{|\gamma_{i, j}|}{\sigma^4} e^{-
   \frac{\tmop{Tr} ( ( W^{*^T} A_{i, j} W^* \nobracket)}{2 \sigma^2}} \tmop{Tr} (A^T_{i,j}A_{ij})\label{eq:conclusion}
\end{align}

It should be noted that $\Lambda^*$ is a function of $\frac{1}{\sigma^2}$. This relationship could be removed by multiplying both sides of the inequality by $\sigma^*$ to yield
\begin{equation}
   \sigma^2 [\min_i ( \bar{\Lambda^*}_i) -\max_j(\Lambda_j^*)] \geq 
    \sum_{i, j} \frac{|\gamma_{i, j}|}{\sigma^2} e^{-
   \frac{\tmop{Tr} ( ( W^{*^T} A_{i, j} W^* \nobracket)}{2 \sigma^2}}\tmop{Tr} (A^T_{i,j}A_{ij}). \label{ineq:large_sigma}
\end{equation} 

Since $\sigma^2$ is always a positive value, as long as all the eigenvalues from $\bar{\Lambda^*}$ is larger than all the eigenvalues from $\Lambda^*$, the left hand side of the equation will always be greater than 0. As $\sigma \rightarrow \infty$, the right hand side approaches 0, and the  condition (\ref{eq:2nd_W}) of Lemma \ref{lemma:2nd_order} is satisfied. 
\end{proof} \label{proof_of_lemma_4}

As a side note, the eigen gap between min($\bar{\Lambda^*}$) and max($\Lambda^*$) controls the range of potential $\sigma$ values —i.e. the larger the eigen gap the easier for $\sigma$ to satisfy \eqref{ineq:large_sigma}. Therefore, the ideal cutoff point should have a large eigen gap.

\end{appendices}


\newpage
\begin{appendices} \label{Appendix:Convergence_plot}
\section{Convergence Plot from Experiments}

Figure \ref{fig:convergence} summarizes the convergence activity of various experiments. For each experiment, the top figure 
provides the magnitude of the objective function. It can be seen 
that the values converges towards a fixed point. The middle plot provide updates of the gradient of the Lagrangian. It can be seen that the gradient converges towards 0. The bottom plot shows the changes in $W$ during each iteration. The change in $W$ converge towards 0.

\begin{figure}[ht]
  \centering
  \includegraphics[width=14cm,height=14cm]{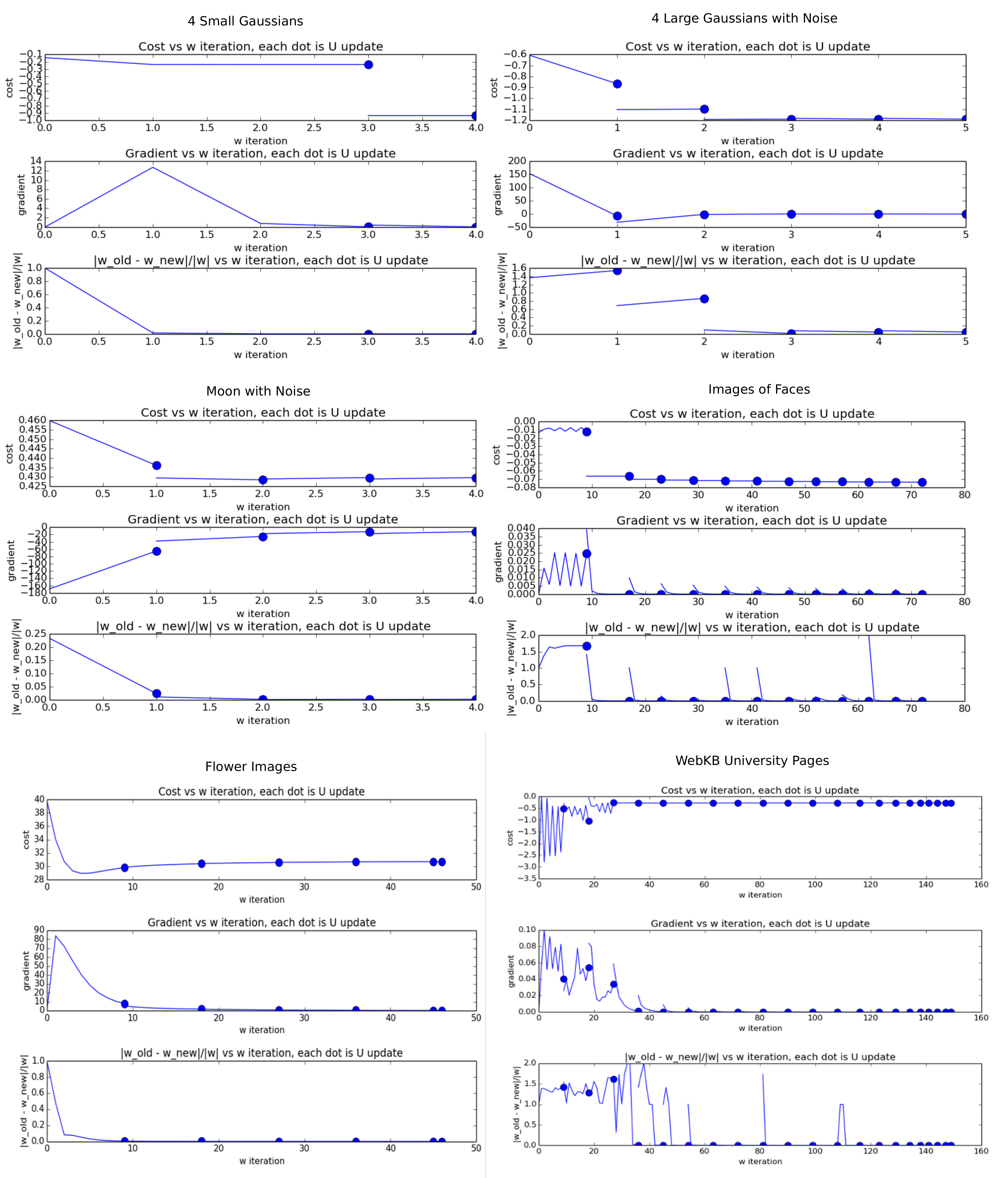}
  \caption{Convergence Results from the Experiments.}
  \label{fig:convergence}
\end{figure}
\end{appendices}
\newpage

\begin{appendices}
\section{Proof of Convergence}

The convergence property of ISM has been analyzed and yields the following theorem.

\begin{theorem}
  A sequence $\{W_k\}_{k\in \mathbb{N}}$ generated by Algorithm~\ref{masteralg} contains a convergent subsequence. 
 \end{theorem}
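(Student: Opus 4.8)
The plan is to reduce the statement to a compactness argument on the Stiefel manifold. First I would verify that every iterate with $k\geq 1$ is feasible. By \eqref{eq:phi_equation} and the fact that each $A_{i,j}=(x_i-x_j)(x_i-x_j)^T$ is symmetric, the matrix $\Phi(W_k)$ is real symmetric; hence, by the spectral theorem, its eigenvectors---in particular the $q$ returned by $\mathrm{eig}_{\min}$---can be chosen orthonormal even when eigenvalues repeat. Therefore $W_{k+1}^T W_{k+1}=I$, exactly as noted after Algorithm~\ref{masteralg}. (The initial $W_0$ produced by \eqref{eq:winit} is orthonormal for the same reason, and in later master iterations $W_0$ is a previously converged, hence orthonormal, ISM output; in any case it suffices to work with the tail $\{W_k\}_{k\geq 1}$, which lies entirely in the feasible set.)

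Next I would show that the feasible set $\mathrm{St}(d,q)=\{W\in\mathcal{R}^{d\times q}:W^TW=I\}$ is compact. It is bounded, since $W^TW=I$ gives $\|W\|_F^2=\mathrm{Tr}(W^TW)=\mathrm{Tr}(I_q)=q$, so every feasible iterate lies on the Frobenius sphere of radius $\sqrt{q}$. It is closed, being the preimage $g^{-1}(\{I_q\})$ of the continuous map $g(W)=W^TW$. A closed and bounded subset of the finite-dimensional space $\mathcal{R}^{d\times q}$ is compact by the Heine--Borel theorem.

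Finally, since $\{W_k\}_{k\geq 1}$ is a sequence contained in the compact set $\mathrm{St}(d,q)$, the Bolzano--Weierstrass theorem guarantees a convergent subsequence $W_{k_m}\to W^{\star}$, and $W^{\star}\in\mathrm{St}(d,q)$ by closedness; this establishes the theorem. There is essentially no hard step: the only thing that genuinely requires checking is that the iterates never leave the compact feasible set, which is secured by the symmetry of $\Phi(W)$. I would stress, however, what this argument does \emph{not} give: it asserts only subsequential convergence, not convergence of the full sequence, nor convergence to a fixed point of Algorithm~\ref{masteralg}. Obtaining those stronger conclusions would require a monotone-descent or contraction-type argument (relating consecutive iterates through the objective \eqref{eq:mainobj} or the eigengap $\delta_{gap}$ of \eqref{eq:eiggap}), which compactness alone does not supply.
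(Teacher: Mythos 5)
Your proof is correct, and it takes a more direct route than the paper's. You observe that every iterate lies on the Stiefel manifold $\mathrm{St}(d,q)$ --- feasibility being automatic because $\Phi(W_k)$ is real symmetric, so $\mathop{\mathrm{eig}}_{\min}$ returns orthonormal columns --- and that $\mathrm{St}(d,q)$ is compact: bounded since $\|W\|_F^2=\mathrm{Tr}(W^TW)=q$, and closed as the preimage of $\{I_q\}$ under the continuous map $W\mapsto W^TW$. Bolzano--Weierstrass then applies immediately to the iterates themselves. The paper instead takes a detour: it bounds $\Phi(W)$ entrywise, using boundedness of $A_{i,j}$ (which requires the data to be centered and scaled), the fact that the Gaussian factor lies in $(0,1]$, and boundedness of $\gamma$ (via bounded $D$, $H$, $Y$ and orthonormal $U$); it then concludes that the eigenvalue matrix $\Lambda$ is bounded and invokes Bolzano--Weierstrass together with continuity of $\Phi$ to claim a convergent subsequence of $\{W_k\}$. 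Comparing the two, your argument buys both rigor and generality: it needs no assumption on the data whatsoever, and it supplies exactly the fact the theorem needs, namely boundedness of the sequence $\{W_k\}$. The paper's chain (``$\Phi$ bounded, $\Phi$ continuous, hence $W$ has a convergent subsequence'') is incomplete as stated, since boundedness of $\Phi$ and continuity do not by themselves bound the iterates; the step that closes this gap is precisely your observation that matrices of orthonormal eigenvectors live in a compact set. What the paper's computation buys instead is explicit boundedness of $\Phi$ and $\Lambda$ along the trajectory, which is of independent interest (e.g., in connection with the condition \eqref{ineq:large_sigma_orig}), but is not required for this theorem. Your closing caveat is also well placed: both arguments yield only subsequential convergence, not convergence of the full sequence nor convergence to a fixed point of Algorithm~\ref{masteralg}.
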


\begin{proof}

According to Bolzano-Weierstrass theorem, if we can show that the sequences generated from
the 1st order relaxation is bounded, it has a convergent subsequence. If we
study the Equation $\Phi(W)$ more closely, the key driver of the sequence of $W_k$
is the matrix $\Phi$, therefore, if we can show that if this matrix is
bounded, the sequence itself is also bounded. We look inside the construction
of the matrix itself.

\[ \Phi_{n + 1} = \left[ \sum_{i, j} \frac{\gamma_{i, j}}{\sigma^2} e^{-
   \frac{\tmop{Tr} (W_n^T A_{i, j} W_n)}{2 \sigma^2}} A_{i, j} \right] \]

From this equation, start with the matrix $A_{i, j} = ( x_i - x_j) ( x_i -
x_j)^T$. Since $x_i, x_j$ are data points that are always centered and scaled
to a variance of 1, the size of this matrix is always constrained. It also
implies that $A_{i, j}$ is a PSD matrix. From this, the exponential term is always limited between
the value of 0 and 1. The value of $\sigma$ is a constant given from the
initialization stage. Lastly, we have the $\gamma_{i, j}$ term. Since $\gamma
= D^{- 1 / 2} H ( U U^T - \lambda Y Y^T) H D^{- 1 / 2}$. The degree matrix
came from the exponential kernel. Since the kernels are bounded, $D$ is also
bounded. The centering matrix $H$ and the previous clustering result $Y$ can
be considered as bounded constants. Since the spectral embedding $U$ is a
orthonormal matrix, it is always bounded. From this, given that the components
of $\Phi$ is bounded, the infinity norm of the $\Phi$ is always bounded. The
eigenvalue matrix of $\Lambda$ is therefore also bounded. Using the
Bolzano-Weierstrass Theorem, the sequence contains a convergent sub-sequence. Given that $\Phi$ is a continuous function of $W$, by continuity, $W$ also has a convergent sub-sequence.
\end{proof}
\end{appendices}

\begin{appendices}
\section{Proof for the initialization}
Although the proof was originally shown through the usage of the 2nd order Taylor Approximation. A simpler approach was later discovered to arrive to the same formulation faster. We first note that Taylor's Expansion around 0 of an exponential is 
\[e^x=1+x+\frac{x^2}{2!}+...  .\]

Given the objective Lagrangian in eq \eqref{eq:lag}, \ we simplify the Lagrangian by using the Taylor approximation only on the problematic exponential term. The approximation is expanded up to the 1st order centering around 0 to yield 
\[ \mathcal{L} \approx - \sum_{i, j} \gamma_{i, j}  \left( 1 - \frac{\tmop{Tr}
   (W^T A_{i, j} W)}{2 \sigma^2} \right) + \frac{1}{2} \tmop{Tr} (\Lambda (I -
   W^T W)) .\]
By taking the derivative of the approximated Lagrangian and setting the derivative to zero, an eigenvalue/eigenvector relationship emerges as
\[ \Phi W =  \left[ \sum_{i, j} \frac{\gamma_{i, j}}{\sigma^2} A_{i,
   j} \right] W_0 = W_0 \Lambda .\]
   
From this, we see that $\Phi_0$ is no longer a function of $W$. Using this $\Phi_0$ we can then calculate a closed form solution for $W_0$
\end{appendices}

\begin{appendices}
\section{Proof for the computational complexity}

For ISM, DG and SM, the bottleneck resides in the computation of the gradient.
\[ f ( W) = \sum_{i, j} \gamma_{i, j} e^{- \frac{\tmop{Tr} ( W^T A_{i, j}
   W)}{2 \sigma^2}} \]
\[ \frac{\partial f}{\partial W} = \left[ \sum_{i, j}  \frac{\gamma_{i,
   j}}{\sigma^2} e^{- \frac{\tmop{Tr} ( W^T A_{i, j} W)}{2 \sigma^2}} A_{i, j}
   \right] W \]
\[ \frac{\partial f}{\partial W} = \left[ \sum_{i, j}  \frac{\gamma_{i,
   j}}{\sigma^2} e^{- \frac{\tmop{Tr} ( W^T \Delta x_{i, j} \Delta x_{i, j}^T
   W)}{2 \sigma^2}} A_{i, j} \right] W \]
Where $A_{i, j} = \Delta x_{i, j} \Delta x_{i, j}^T$. The variables have the
following dimensions.
\[ \begin{array}{l}
     x_{i, j} \in \mathcal{R}^{d \times 1}\\
     W \in \mathcal{R}^{d \times q}
   \end{array} \]
To compute a new $W$ with DG, we first mulitply $\Delta x_{i, j}^T W$, which
is $O ( d)$. Note that $W$ in DG is always 1 single column. Next, it
multiplies with its own transpose to yied $O ( d \noplus + q^2)$. Then we
compute $A_{i, j}$ to get $O ( d + q^2 + d^2)$. Since this operation needs to
be added $n^2$ times, we get, $O ( n^2 ( d + q^2 + d^2))$. Since $d \gg q$,
this notation reduces down to $O ( n^2 d^2)$. Let $T_1$ be the number of
iterations until convergence, then it becomes $O ( T_1 n^2 d^2)$. Lastly, in
DG, this operation needs to be repeated $q$ times, hence, $O ( T_1 n^2 d^2
q)$.

To compute a new $W$ with SM, we first mulitply $\Delta x_{i, j}^T W$, which
is $O ( d q)$. Next, it multiplies with its own transpose to yied $O ( d q
\noplus + q^2)$. Then we compute $A_{i, j}$ to get $O ( d q + q^2 + d^2)$.
Since this operation needs to be added $n^2$ times, we get, $O ( n^2 ( d q +
q^2 + d^2))$. Since $d \gg q$, this notation reduces down to $O ( n^2 d^2)$.
The SM method requires the computation of the inverse of $d \times d$ matrix.
Since inverses is cubic, it becomes $O ( n^2 d^2 \noplus + d^3)$. Lastly, let
$T_2$ be the number of iterations until convergence, then it becomes $O ( T_2
( n^2 d^2 + d^3))$.

To compute a new $W$ with ISM, we first mulitply $\Delta x_{i, j}^T W$, which
is $O ( d q)$. Next, it multiplies with its own transpose to yied $O ( d q
\noplus + q^2)$. Then we compute $A_{i, j}$ to get $O ( d q + q^2 + d^2)$.
Since this operation needs to be added $n^2$ times, we get, $O ( n^2 ( d q +
q^2 + d^2))$. Since $d \gg q$, this notation reduces down to $O ( n^2 d^2)$.
The ISM method requires the computation of the eigen decomposition of $d
\times d$ matrix. Since inverses is cubic, it becomes $O ( n^2 d^2 \noplus +
d^3)$. Lastly, let $T_3$ be the number of iterations until convergence, then
it becomes $O ( T_3 ( n^2 d^2 + d^3))$.

\end{appendices}

\begin{appendices} 
\section{Measure of Non-linear Relationship by HSIC Versus Correlation}

The figure below demonstrates a visual comparison of HSIC and correlation. It can be seen that HSIC  measures non-linear relationships, while correlation does not. 
\begin{figure}[ht]
  \centering
  \includegraphics[width=14cm,height=3cm]{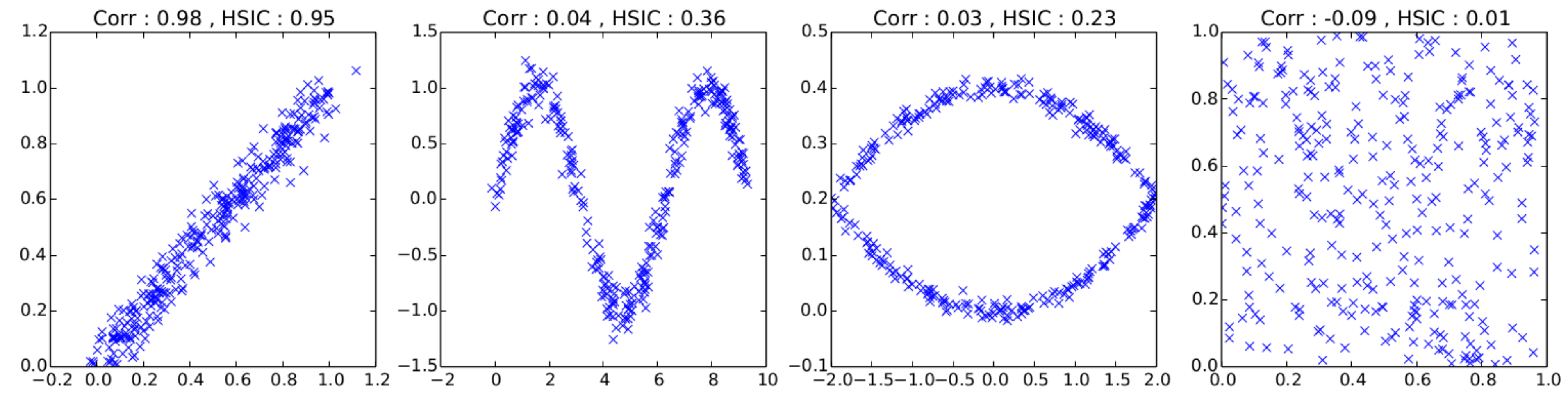}
  \caption{Showing that HSIC captures non-linear information.}
  \label{HSIC_capture_nonlinear}
\end{figure}
\label{App:HSIC}
\end{appendices}

\begin{appendices}
\section{Implementation Details of the Cost function}

The computation of the cost using the formulation below is slow if it is implemented using a loop. 
\begin{equation}
  \begin{array}{ll}
    \min & - \sum_{i, j} \gamma_{i, j} e^{- \frac{\tmop{tr} ( W^T A_{i, j}
    W)}{2 \sigma^2}}\\
    W & \\
    s.t & W^T W = I\\
    & W \in \mathbb{R}^{d \times q}\\
    & A \in \mathbb{R}^{d \times d}\\
    & \gamma_{i, j} \in \mathbb{R}
  \end{array}
\end{equation}

Instead, we use the original formulation to derive a faster way to compute the cost.

Starting with the original cost function as
\[ \tmop{cost} = \tmop{HSIC} ( X W, U) - \lambda \tmop{HSIC} ( X W, Y) \]
\[ \tmop{cost} = \tmop{Tr} ( D^{-1/2}K_{XW} D^{-1/2} H U U^T H) - \lambda \tmop{Tr} (D^{-1/2}K_{XW} D^{-1/2} H Y Y^T H) \]

When optimizing $U$, it is obvious that the 2nd portion does not effect the
optimization. Therefore, $U$ can be solved using the following form.
\[ U = \begin{array}{l}
     \tmop{argmin}\\
     U
   \end{array} \tmop{Tr} ( U^T H D^{- 1 / 2} K_{X W} D^{- 1 / 2} H U) \]

If we are optimization for $W$, using the combination of the rotation property and the combination of the 2 traces, the cost can be written as 
\[ \tmop{cost} = \tmop{Tr} ( [ D^{- 1 / 2} H ( U U^T - \lambda Y Y^T) H D^{-
   1 / 2}] K). \]

In this form, it can be seen that the update of $W$ matrix will only affect
the kernel $K$ and the degree matrix $D$. Therefore, it makes sens to treat
the middle portion as a constant which we refer as $\Psi$.
\[ \tmop{cost} = \tmop{Tr} ( [ D^{- 1 / 2} \Psi D^{- 1 / 2}] K) \]

Given that $[ D^{- 1 / 2} \Psi D^{- 1 / 2}]$ is a symmetric matrix, from this
form, we can convert the trace into an element wise product $\odot$.
\[ \tmop{cost} = \sum_{i,j} ([ D^{- 1 / 2} \Psi D^{- 1 / 2}] \odot K )_{i,j} \]

To further reduction the amount of operation, we let $d$ be a vector of the
diagonal elements of $D^{- 1 / 2}$, hence $d = \tmop{diag} ( D^{- 1 / 2})$,
this equality hold.
\[ D^{- 1 / 2} \Psi D^{- 1 / 2} = [ d d^T] \odot \Psi \]

Therefore, the final cost function can be written in its simplest form as :
\[ \tmop{cost} = \sum_{i,j} \Gamma_{i,j} = \sum_{i,j} (\Psi \odot [ d d^T] \odot K )_{i,j} \]

During update, as $W$ update during each iteration, the matrix $\Psi$ stays as
a constant while $d d^T$ and $K$ update. The benefit of this form minimize the
complexity of the equation, while simplify cost into easily parallelizable
matrix multiplications. The equation also clearly separates the elements into
portions that require an update and portions that does not.
\end{appendices}

\begin{appendices}
\section{Implementation Details of the Derivative}
As it was shown from previous sections, the gradient of our cost function using the Gaussian Kernel is 
\[ \nabla f ( W) = \left[ \frac{1}{\sigma^2} \sum \gamma_{i, j} K_{i, j} A_{i, j} \right] W  - 2W\Lambda\].

If we let $\Psi = \left[  \frac{1}{2\sigma^2}  \gamma_{i, j} K_{i, j}  \right]$ , it can be rewritten as 
\[ \nabla f ( W) = 
\left[ \sum \Psi_{i, j} A_{i, j} \right] W - W\Lambda.\]

From this formulation, the optimal $W$ is equivalent to the eigenvectors of the $\left[ \sum \Psi_{i, j} A_{i, j} \right]$. According to ISM, the $q$ eigenvectors corresponding to the smallest eigenvalues is used for $W$. Since solving $\sum \Psi_{i, j} A_{i, j}$ using a loop is slow, we vectorize the formulation so that

\[ \sum \Psi_{i, j} A_{i, j} = X^T[D_{\Psi} - \Psi]X \], 

where $D_\Psi$ is the degree matrix of $Psi$ such that the diagonal elements are defined as

\[ d_{i,i} = \sum_{j} \Psi_{i,j} \], 

and $X \in \mathbb{R}^{n \times d}$ is the original data.

\end{appendices}

\begin{appendices} 
\section{Hyperparameters Used in Each Experiment}
    \begin{table}[h]
    \centering
        \begin{tabular}{l|ccc}
        & $\sigma$ & $\lambda$ & $q$ \\ \cline{1-4}
        Gauss A & 1 & 0.04 & 1 \\
        Gauss B 200 & 5 & 2 & 3 \\
        Moon 400 & 0.1 & 1 & 3 \\
        Moon+N 200 & 0.2 & 0.1 & 6 \\
        Flower & 2 & 10 & 2 \\
        Face & 3.1 & 1 & 17 \\
        Web KB & 18.7 & 0.057 & 4 \\
        \end{tabular}
    \end{table}
    \label{App:Hyperparameters}
\end{appendices}

\end{document}